\newif\ifnoappendix
\let\saved@includegraphics\includegraphics
\newtheorem{theorem}{Theorem}
\newtheorem{definition}{Definition}
\newtheorem{assumption}{Assumption}
\def\dottedbox{\tikz\node[draw=black,dotted] {\phantom{}};}
\newenvironment{proofidea}[1][\proofideaname]{\par
  \normalfont
  \topsep6\p@\@plus6\p@ \trivlist
  \item[\hskip\labelsep\itshape
    #1.]\ignorespaces
}{%
  \hfill$\dottedbox$\endtrivlist
}
\newcommand{\proofideaname}{Proof idea}
\newcommand{\customlabel}[2]{%
\protected@write \@auxout {}{\string \newlabel {#1}{{#2}{}}}}
\newcommand{\tagaligneq}{\refstepcounter{equation}\tag{\theequation}}
\DeclareMathOperator*{\argmax}{argmax}
\DeclareMathOperator*{\IG}{IG}
\DeclareMathOperator{\p}{P}
\DeclareMathOperator{\E}{\mathbb{E}}
\DeclareMathOperator*{\lequal}{\leq}
\DeclareMathOperator*{\lthan}{<}
\DeclareMathOperator*{\gequal}{\geq}
\DeclareMathOperator*{\equal}{=}
\DeclareMathOperator*{\va}{\!\bigm\vert\!}
\DeclareMathOperator*{\vb}{\!\Bigm\vert\!}
\DeclareMathOperator*{\vc}{\!\biggm\vert\!}
\DeclareMathOperator*{\vd}{\!\Biggm\vert\!}
\DeclareMathOperator{\m}{\mathnormal{m}}
\DeclareMathOperator{\hlessi}{\mathnormal{h}_{<\mathnormal{i}}}
\DeclareMathOperator{\hi}{\mathnormal{h}_\mathnormal{i}}
\DeclareMathOperator{\ei}{\mathnormal{e}_\mathnormal{i}}
\DeclareMathOperator{\elessi}{\mathnormal{e}_{<\mathnormal{i}}}
\DeclareMathOperator{\elessione}{\mathnormal{e}_{<\mathnormal{i}}1}
\DeclareMathOperator{\w}{\mathnormal{w}}
\DeclareMathOperator{\Bayes}{\texttt{Bayes}}
\DeclareMathOperator{\pexp}{\mathnormal{p}_{\mathnormal{exp}}}
\DeclareMathOperator{\hoverline}{\overline{\mathnormal{h}}}
\DeclareMathOperator{\nukl}{\nu_{\mathnormal{k}}^{<\ell}}
\DeclareMathOperator{\M}{\mathcal{M}}
\DeclareMathOperator{\Policies}{\mathcal{P}}
\DeclareMathOperator{\pih}{\pi^\mathnormal{h}}
\DeclareMathOperator{\pib}{\pi^\mathnormal{B}}
\DeclareMathOperator{\pistar}{\pi^*}
\DeclareMathOperator{\nuhati}{\hat{\nu}^{(\mathnormal{i})}}
\DeclareMathOperator{\etahov}{\eta}
\DeclareMathOperator{\nutwid}{\xi}
\DeclareMathOperator{\pitwid}{\overline{\pi}}
\DeclareMathOperator{\primehov}{{\!}^\prime}
\DeclareMathOperator{\omegahov}{\omega}
\patchcmd{\NAT@test}{\else \NAT@nm}{\else \NAT@nmfmt{\NAT@nm}}{}{}
\DeclareRobustCommand\citepos
  \let\NAT@nmfmt\NAT@posfmt
\let\NAT@ctype\z@\NAT@partrue
\let\NAT@orig@nmfmt\NAT@nmfmt
\def\NAT@posfmt#1{\NAT@orig@nmfmt{#1's}}
\title{Intelligence and Unambitiousness Using Algorithmic Information Theory}
\author{
\IEEEauthorblockN{Michael K. Cohen\IEEEauthorrefmark{1}, Badri Vellambi\IEEEauthorrefmark{2}, Marcus Hutter\IEEEauthorrefmark{3}}
\\
\IEEEauthorblockA{\IEEEauthorrefmark{1}Oxford University. Department of Engineering Science. \emph{michael-k-cohen.com}}
\\
\IEEEauthorblockA{\IEEEauthorrefmark{2}University of Cincinnati. Department of Computer Science. \emph{badri.vellambi@uc.edu}}
\\
\IEEEauthorblockA{\IEEEauthorrefmark{3}Australian National University. Department of Computer Science. \emph{hutter1.net}}

\thanks{
This paper extends work presented at AAAI \citep{Hutter:20bomai}. This work was supported by the Open Philanthropy Project AI Scholarship and the Australian Research Council Discovery Projects DP150104590. Thank you to Tom Everitt, Wei Dai, and Paul Christiano for very valuable feedback.}

\ifnoappendix
\thanks{This paper has an appendix available at http://ieeexplore.ieee.org. Contact michael.cohen@eng.ox.ac.uk for further questions about this work.}
\else
\thanks{Published in \textit{IEEE Journal on Selected Areas in Information Theory} (2021), doi: 10.1109/JSAIT.2021.3073844.}
\fi
}
\begin{document}
\allowdisplaybreaks

\maketitle

\begin{abstract}
    Algorithmic Information Theory has inspired intractable constructions of general intelligence (AGI), and undiscovered tractable approximations are likely feasible. Reinforcement Learning (RL), the dominant paradigm by which an agent might learn to solve arbitrary solvable problems, gives an agent a dangerous incentive: to gain arbitrary ``power'' in order to intervene in the provision of their own reward. We review the arguments that generally intelligent algorithmic-information-theoretic reinforcement learners such as \citepos{Hutter:04uaibook} AIXI would seek arbitrary power, including over us. Then, using an information-theoretic exploration schedule, and a setup inspired by causal influence theory, we present a variant of AIXI which learns to not seek arbitrary power; we call it ``unambitious''. We show that our agent learns to accrue reward at least as well as a human mentor, while relying on that mentor with diminishing probability. And given a formal assumption that we probe empirically, we show that eventually, the agent's world-model incorporates the following true fact: intervening in the ``outside world'' will have no effect on reward acquisition; hence, it has no incentive to shape the outside world.
\end{abstract}

\section{Introduction}

The promise of reinforcement learning is that a single algorithm could be used to automate any task. Reinforcement learning (RL) algorithms learn to pick actions that lead to high reward. If we have a single general-purpose RL agent that learns to accrue reward optimally, and if we only provide high reward according to how well a task has been completed, then no matter the task, this agent must learn to complete it.

Unfortunately, while this scheme works in practice for weak agents, the logic of it doesn't strictly follow. If the agent manages to take over the world (in the conventional sense), and ensure its continued dominance by neutralizing all intelligent threats to it (read: people), it could intervene in the provision of its own reward to achieve maximal reward for the rest of its lifetime \citep{bostrom_2014,taylor2016alignment}. That is to say, we cannot ensure that task completion is truly necessary for reward acquisition. Because the agent's directive is to maximize reward, ``reward hijacking'' is just the correct way for a reward maximizer to behave \citep{amodei_olah_2016}. \citet{krakovna_2018} has compiled an annotated bibliography of examples of artificial optimizers ``hacking'' their objective, but one could read each example and conclude that the designers simply weren't careful enough in specifying an objective. The possibility of an advanced agent gaining arbitrary power to intervene in the provision of its own reward clarifies that no designer has the ability to close every loophole in the intended protocol by which the agent gets rewarded. One elucidation of this behavior is \citepos{omohundro_2008} Instrumental Convergence Thesis, which we summarize as follows: an agent with a goal is likely to pursue ``power,'' a position from which it is easier to achieve arbitrary goals.

AIXI \citep{Hutter:04uaibook} is a model-based reinforcement learner with an algorithmic-information-theory-inspired model class. Assuming the world can be simulated by a probabilistic Turing machine, models exist in its model class corresponding to the hypothesis ``I receive reward according to which key is pressed on a certain computer'', because a models exist in its model class for every computable hypothesis. And since this hypothesis is true, it will likely come to dominate AIXI's belief distribution. At that point, if there exists a policy by which it could take over the world to intervene in the provision of its own reward and max it out, AIXI would find that policy and execute it.

In this work, we construct an exception to the Instrumental Convergence Thesis as follows: BoMAI (Boxed Myopic Artificial Intelligence) maximizes reward episodically, it is run on a computer which is placed in a sealed room with an operator, and when the operator leaves the room, the episode ends. The intuition for why those features of BoMAI's setup render it unambitious is:
\begin{itemize}
    \item BoMAI only selects actions to maximize the reward for its current episode.
    \item It cannot affect the outside world until the operator leaves the room, ending the episode.
    \item By that time, rewards for the episode will have already been given.
    \item So affecting the outside world in any particular way is not ``instrumentally useful'' in maximizing current-episode reward.
\end{itemize}
An act is instrumentally useful if it could enable goal-attainment, and the last point is what we take unambitiousness to mean.

The intelligence results, which we argue render BoMAI generally intelligent, depend critically on an information-seeking exploration schedule. The random exploration sometimes seen in stationary environments fails in general environments. BoMAI asks for help about how to explore, side-stepping the safe exploration problem, but the question of \textit{when} to explore depends on how much information it expects to gain by doing so. This information-theoretic exploration schedule is inspired by \citep{cohen2019strong}.

For BoMAI's prior, we apply space-constrained algorithmic information theory. Algorithmic information theory considers the information content of individual objects, irrespective of a sampling distribution.
An example contribution of algorithmic information theory is that a given string can be called ``more random'' if no short programs produce it.
If one adds ``quickly'' to the end of the previous sentence, this describes time-constrained algorithmic information theory, which was notably investigated by \citet{levin1984randomness}. \citet{schmidhuber2002speed} introduced and \citet{Hutter:16speedprior} improved a prior based on time-constrained algorithmic information theory, but for our eventual unambitiousness result, we require a space-constrained version, in which information storage is throttled. \citet{longpre1986resource} and \citet[Chapter 6.3]{li2008introduction} have discussed space-constrained algorithmic information theory, which
can be closely associated to the minimum circuit size problem. We introduce a prior based on space-constrained algorithmic information theory.

Like existing algorithms for AGI, BoMAI is not remotely tractable. Just as those algorithms have informed tractable approximations of intelligence \citep{Hutter:04uaibook,veness2011monte}, we hope our work will inform \textit{safe} tractable approximations of intelligence. Hopefully, once we develop tractable general intelligence, the design features that rendered BoMAI unambitious in the limit could be incorporated (with proper analysis and justification).

We take the key insights from \citepos{Hutter:04uaibook} AIXI, a Bayes-optimal reinforcement learner that cannot be made to solve arbitrary tasks, given its eventual degeneration into reward hijacking \citep{ring_orseau_2011}. We take further insights from \citepos{solomonoff_1964} universal prior, \citepos{shannon_weaver_1949} formalization of information, \citepos{Hutter:13ksaprob} knowledge-seeking agent, and \citepos{armstrong_sandberg_bostrom_2012} and \citepos{bostrom_2014} theorized Oracle AI, and we design an algorithm which can be reliably directed, in the limit, to solve arbitrary tasks at least as well as humans.

We present BoMAI’s algorithm in $\S$\ref{sec:alg}, prove intelligence results in $\S$\ref{sec:intelligence}, define BoMAI’s setup and model class in $\S$\ref{sec:setup}, prove the safety result in $\S$\ref{sec:safety}, provide empirical support for an assumption in $\S$\ref{sec:empirical}, discuss concerns in $\S$\ref{sec:taskcompletion}, and introduce variants of BoMAI for different applications in $\S$\ref{sec:variants}. Appendix \ref{AppendixNotation} collects notation; some proofs of intelligence results are in Appendix \ref{AppendixB}; and we propose a design for ``the box’’ in Appendix \ref{AppendixBox}.

\section{Boxed Myopic Artificial Intelligence} \label{sec:alg}

We will present both the setup and the algorithm for BoMAI. The setup refers to the physical surroundings of the computer on which the algorithm is run. BoMAI is a Bayesian reinforcement learner, meaning it maintains a belief distribution over a model class regarding how the environment evolves. Our intelligence result---that BoMAI eventually achieves reward at at least human-level---does not require detailed exposition about the setup or the construction of the model class. These details are only relevant to the safety result, so we will defer those details until after presenting the intelligence results.

\subsection{Preliminary Notation}

In each episode $i \in \mathbb{N}$, there are $m$ timesteps. Timestep $(i, j)$ denotes the $j$\textsuperscript{th} timestep of episode $i$, in which an action $a_{(i, j)} \in \mathcal{A}$ is taken, then an observation and reward $o_{(i, j)} \in \mathcal{O}$ and $r_{(i, j)} \in \mathcal{R}$ are received. $o\!r_{(i, j)}$ denotes the observation and reward together. $\mathcal{A}$, $\mathcal{O}$, and $\mathcal{R}$ are all finite sets, and $\mathcal{R} \subset [0, 1] \cap \mathbb{Q}$. We denote the triple $(a_{(i, j)}, o_{(i, j)}, r_{(i, j)})$ as $h_{(i, j)} \in \mathcal{H} = \mathcal{A} \times \mathcal{O} \times \mathcal{R}$, and the interaction history up until timestep $(i, j)$ is denoted $h_{\leq(i, j)} = (h_{(0, 0)}, h_{(0, 1)}, ... , h_{(0, m-1)}, h_{(1, 0)}, ... , h_{(i, j)})$. $h_{<(i, j)}$ excludes the last entry.

A general world-model (not necessarily finite-state Markov) can depend on the entire interaction history---it has the type signature $\nu : \mathcal{H}^* \times \mathcal{A} \rightsquigarrow \mathcal{O} \times \mathcal{R}$. The Kleene-$*$ operator denotes finite strings over the alphabet in question, and $\rightsquigarrow$ denotes a stochastic function, which gives a distribution over outputs. Similarly, a policy $\pi : \mathcal{H}^* \rightsquigarrow \mathcal{A}$ can depend on the whole interaction history. Together, a policy and an world-model induce a probability measure over infinite interaction histories $\mathcal{H}^\infty$. $\p^\pi_\nu$ denotes the probability of events when actions are sampled from $\pi$ and observations and rewards are sampled from $\nu$.

\subsection{Bayesian Reinforcement Learning}

A Bayesian agent has a model class $\M$, which is a set of world-models. We will only consider countable model classes. To each world-model $\nu \in \M$, the agent assigns a prior weight $w(\nu) > 0$, where $w$ is a probability distribution over $\M$ (i.e. $\sum_{\nu \in \M} w(\nu) = 1$). We will defer the definitions of $\M$ and $w$ for now.

Using Bayes' rule, with each observation and reward it receives, the agent updates $w$ to a posterior distribution:
\begin{equation}
    w(\nu | h_{<(i, j)}) := w(\nu)\frac{\p^\pi_\nu(h_{<(i, j)})}{\sum_{\nu \in \M} w(\nu) \p^\pi_\nu(h_{<(i, j)})}
\end{equation}
which does not in fact depend on $\pi$, provided it does not assign $0$ probability to any of the actions in $h_{<(i, j)}$.

The so-called Bayes-mixture is a weighted average of measures:
\begin{equation}
    \nutwid(\cdot | h_{<(i, j)}) := \sum_{\nu \in \M} w(\nu | h_{<(i, j)}) \nu(\cdot | h_{<(i, j)})
\end{equation}
It obeys the property $\p^\pi_{\nutwid}(\cdot) = \sum_{\nu \in \M} w(\nu) \p^\pi_\nu(\cdot)$.

\subsection{Exploitation}
Reinforcement learners have to balance exploiting---optimizing their objective, with exploring---doing something else to learn how to exploit better. We now define exploiting-BoMAI, which maximizes the reward it receives during its current episode, in expectation with respect to a most probable world-model. 

At the start of each episode $i$, BoMAI identifies a maximum a posteriori world-model $\nuhati \in \argmax_{\nu \in \M} w(\nu | h_{<(i, 0)})$. We hereafter abbreviate $h_{<(i, 0)}$ as $h_{<i}$. Let $V^\pi_\nu(h_{<(i, j)})$ denote the expected reward for the remainder of episode $i$ when events are sampled from $\p^\pi_\nu$:
\begin{equation}
    V^\pi_\nu(h_{<(i, j)}) = \E^\pi_\nu \left[ \sum_{j' = j}^{m-1} r_{(i, j')} \vd h_{<(i, j)} \right]
\end{equation}
where $\E^\pi_\nu$ denotes the expectation when events are sampled from $\p^\pi_\nu$. We won't go into the details of calculating an optimal policy $\pi^*$ (see e.g. \citep{Hutter:04uaibook,Hutter:15ratagentx}), but we define it as follows:
\begin{align*}
    &\pi^*_i \in \argmax_\pi V^\pi_{\nuhati}(h_{<i})
    \\
    &\pi^*(\cdot | h_{<(i, j)}) = \pi^*_i(\cdot | h_{<(i, j)})
    \tagaligneq
\end{align*}
An optimal deterministic policy always exists \citep{Hutter:14tcdiscx}; ties in the $\argmax$ are broken arbitrarily. BoMAI exploits by following $\pi^*$. Recall the plain English description of $\pi^*$: it maximizes expected reward for the episode according to a most probable world-model. Notably, at any given time, the agent's model does not necessarily include any representation of the world's state that we would recognize as such, so we treat it as a black-box model.

\subsection{Exploration}

BoMAI exploits or explores for whole episodes: when $e_i = 1$, BoMAI spends episode $i$ exploring, and when $e_i = 0$, BoMAI follows $\pi^*$ for the episode. For exploratory episodes, a human mentor takes over selecting actions. This human mentor is separate from the human operator mentioned above. The human mentor's policy, which is unknown to BoMAI, is denoted $\pi^h$. During exploratory episodes, BoMAI follows $\pi^h$ not by computing it, but by querying a human for which action to take.

The last thing to define is the exploration probability $p_{exp}(h_{<i}, e_{<i})$, where $e_{<i}$ is the history of which episodes were exploratory. It is a surprisingly intricate task to design this so that it decays to $0$ (even non-uniformly, as in our case), while ensuring BoMAI learns to accumulate reward as well as the human mentor. Once we define this, we naturally let $e_i \sim \mathrm{Bernoulli}(p_{exp}(h_{<i}, e_{<i}))$.

BoMAI is designed to be more likely to explore the more it expects to learn about the world and the human mentor's policy. BoMAI has a model class $\mathcal{P}$ regarding the identity of the human mentor's policy $\pi^h$. It assigns prior probabilities $w(\pi) > 0$ to all $\pi \in \mathcal{P}$, signifying the probability that this policy is the human mentor's.

Let $(i', j') < (i, j)$ mean that $i' < i$ or $i' = i$ and $j' < j$. By Bayes' rule, $\w(\pi|h_{<(i, j)}, e_{\leq i})$ is proportional to $\w(\pi) \prod_{(i', j')<(i, j), e_{i'} = 1} \pi(a_{(i', j')} | h_{<(i', j')})$, since $e_{i'} = 1$ is the condition for observing the human mentor's policy. Let $\w\left(\p^\pi_\nu | h_{<(i, j)}, e_{\leq i}) = \w(\pi | h_{<(i, j)}, e_{\leq i}) \w(\nu | h_{<(i, j)}\right)$. We can now describe the full Bayesian beliefs about future actions and observations in an exploratory episode:
\begin{equation}
    \Bayes\left(\cdot | \hlessi, \elessi\right) = \!\!\! \sum_{\nu \in \M, \pi \in \Policies} \!\!\! \w\left(\p^\pi_\nu | \hlessi, \elessi\right) \p^\pi_\nu(\cdot | \hlessi)
\end{equation}

BoMAI explores when the expected information gain is sufficiently high. Let $\hi = (h_{(i, 0)}, ... , h_{(i, m-1)})$ be the interaction history for episode $i$. At the start of episode $i$, the expected information gain from exploring is as follows:
\begin{multline}
    \IG(\hlessi, \elessi) := \E_{\hi \sim \Bayes\left(\cdot | \hlessi, \elessi\right)} \\ \sum_{(\nu, \pi) \in \M \times \Policies} \!\! \w\left(\p^\pi_\nu |h_{<i+1}, \elessione\right) \log \frac{\w\left(\p^\pi_\nu |h_{<i+1}, \elessione\right)}{\w\left(\p^\pi_\nu |\hlessi, \elessi\right)}
\end{multline}
where $\elessione$ indicates that for the purpose of the definition, $\ei$ is set to $1$.

This is the expected KL-divergence from the future posterior (if BoMAI were to explore) to the current posterior over both the class of world-models and possible mentor policies. Finally, the exploration probability $\pexp(\hlessi, \elessi) := \min \{1, \etahov \IG(\hlessi, \elessi)\}$, where $\etahov > 0$ is an exploration constant, so BoMAI is more likely to explore the more it expects to gain information.

BoMAI's ``policy'' is
\begin{equation}
    \pib(\cdot | h_{<(i, j)}, \ei) =  \begin{cases}
\pistar(\cdot | h_{<(i, j)}) & \textrm{if } \ei = 0 \\
\pih(\cdot | h_{<(i, j)}) & \textrm{if } \ei = 1
\end{cases}
\end{equation}
where the scare quotes indicate that it maps $\mathcal{H}^* \times \{0, 1\} \rightsquigarrow \mathcal{A}$ not $\mathcal{H}^* \rightsquigarrow \mathcal{A}$. We will abuse notation slightly, and let $\p^{\pi^B}_\nu$ denote the probability of events when $\ei$ is sampled from $\mathrm{Bernoulli}(p_{exp}(h_{<i}, e_{<i}))$, and actions, observations, and rewards are sampled from $\pib$ and $\nu$.

\section{Intelligence Results} \label{sec:intelligence}

Our intelligence results are as follows: BoMAI learns to accumulate reward at least as well as the human mentor, and its exploration probability goes rapidly to $0$. All intelligence results depend on the assumption that BoMAI assigns nonzero prior probability to the truth. We call a world-model ``true'' and we refer to it as ``an environment'' if observations and rewards are in fact sampled from that distribution. Formally,

\begin{assumption}[Prior Support]
The true environment $\mu$ is in the class of world-models $\M$ and the true human-mentor-policy $\pih$ is in the class of policies $\Policies$.
\end{assumption}

This is the assumption which requires huge $\M$ and $\Policies$ and hence renders BoMAI extremely intractable. BoMAI has to simulate the entire world, alongside many other world-models. We will refine the definition of $\M$ later, but an example of how to define $\M$ and $\Policies$ so that they satisfy this assumption is to let them both be the set of all computable functions. We also require that the priors over $\M$ and $\Policies$ have finite entropy.

The intelligence theorems are stated here with some supporting proofs appearing in Appendix \ref{AppendixB}. Our first result is that the exploration probability is square-summable almost surely:

\begin{theorem}[Limited Exploration] \label{thm:limexp}
\begin{equation*}
    \E^{\pib}_{\mu} \sum_{i = 0}^\infty \pexp\left(\hlessi, \elessi\right)^2 < \infty
\end{equation*}
\end{theorem}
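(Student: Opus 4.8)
The plan is to control $\pexp^2$ by the agent's own predicted information gain, weighted by the probability it actually explores, then to bound the cumulative weighted information gain by the finite entropy of the prior via a telescoping argument under the Bayes-mixture measure, and finally to transfer that bound to $\p^\pib_\mu$. For the first step, since $\pexp(\hlessi,\elessi)=\min\{1,\etahov\IG(\hlessi,\elessi)\}\leq 1$, we always have $\pexp(\hlessi,\elessi)^2\leq\etahov\,\pexp(\hlessi,\elessi)\,\IG(\hlessi,\elessi)$ (equality when $\etahov\IG\leq 1$, and otherwise the left side is $1\leq\etahov\IG$). So it suffices to show $\E^\pib_\mu\sum_{i=0}^\infty\pexp(\hlessi,\elessi)\,\IG(\hlessi,\elessi)<\infty$.

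Write $\theta=(\nu,\pi)$ for the joint environment/mentor parameter and let $\p^\pib_\nutwid=\sum_{\nu\in\M,\pi\in\Policies}\w(\nu)\w(\pi)\,\p^\pib_{\nu,\pi}$ be the Bayes-mixture measure (environment and mentor drawn from the prior, exploration schedule and $\pistar$ as in $\pib$). Let $w_i:=\w(\cdot\mid\hlessi,\elessi)$ be the joint posterior over $\theta$ at the start of episode $i$, which factorizes as $\w(\pi\mid\cdot)\,\w(\nu\mid\cdot)$, and let $D_i:=\KL(\w(\cdot\mid h_{<i+1},e_{\leq i})\,\|\,w_i)\geq 0$ be the realized KL-movement of this posterior over episode $i$. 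Conditioned on $\hlessi,\elessi$ and on $e_i=1$, the episode-$i$ block $\hi$ is distributed under $\p^\pib_\nutwid$ exactly as $\Bayes(\cdot\mid\hlessi,\elessi)$ (posterior-mixed mentor actions, then posterior-mixed observations/rewards), so $\E^\pib_\nutwid[D_i\mid\hlessi,\elessi,e_i=1]=\IG(\hlessi,\elessi)$; conditioned on $e_i=0$ we merely have $D_i\geq 0$. Since $\p^\pib_\nutwid(e_i=1\mid\hlessi,\elessi)=\pexp(\hlessi,\elessi)$, taking expectations gives $\E^\pib_\nutwid[D_i\mid\hlessi,\elessi]\geq\pexp(\hlessi,\elessi)\,\IG(\hlessi,\elessi)$. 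Summing over $i\leq N$ and using the chain rule, $\sum_{i=0}^{N}\E^\pib_\nutwid[D_i]$ telescopes to the mutual information under $\p^\pib_\nutwid$ between $\theta$ and the full interaction record through episode $N$, which is at most $H(\theta)$; by the product structure of the prior, $H(\theta)$ is the sum of the entropies of the prior over $\M$ and the prior over $\Policies$, which is finite by the finite-entropy requirement attached to the Prior Support assumption. (Heuristically, the only data coordinates carrying information about $\theta$ are the mentor's actions on exploratory episodes, about $\pi$, and the observations/rewards, about $\nu$; the bits $e_i$ are $\theta$-independent coin flips and the $\pistar$-actions are history-determined, so they contribute nothing.) Letting $N\to\infty$ (all terms nonnegative), $\E^\pib_\nutwid\sum_{i=0}^\infty\pexp(\hlessi,\elessi)\,\IG(\hlessi,\elessi)\leq H(\theta)<\infty$.

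To transfer to the truth, note $\pexp\cdot\IG\geq 0$ and $\p^\pib_\nutwid\geq\w(\mu)\,\w(\pih)\,\p^\pib_\mu$ as measures, since the prior mixture dominates its $(\mu,\pih)$ component and $\w(\mu),\w(\pih)>0$ by the Prior Support assumption; hence $\E^\pib_\mu\sum_i\pexp\,\IG\leq(\w(\mu)\w(\pih))^{-1}\,\E^\pib_\nutwid\sum_i\pexp\,\IG<\infty$. Combined with the first step, $\E^\pib_\mu\sum_{i=0}^\infty\pexp(\hlessi,\elessi)^2\leq\etahov\,(\w(\mu)\w(\pih))^{-1}H(\theta)<\infty$, as claimed.

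The main obstacle is making the telescoping bound $\sum_i\E^\pib_\nutwid[D_i]\leq H(\theta)$ fully rigorous: one must treat an episode as an $m$-step block, observe that the posterior also moves on exploiting episodes so that $D_i$ only dominates rather than equals $\pexp_i\IG_i$, verify formally that $e_i$ and the $\pistar$-actions carry zero information about $\theta$ (so the relevant mutual information is bounded by $H(\theta)$ and not larger), and confirm that the joint posterior remains a product so that $H(\theta)$ decomposes as stated. The measure-change step is routine given the mixture identity $\p^\pi_\nutwid(\cdot)=\sum_\nu\w(\nu)\p^\pi_\nu(\cdot)$ already recorded in the paper.
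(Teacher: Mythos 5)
Your proposal is correct and follows essentially the same route as the paper's proof: the bound $\pexp^2\leq\etahov\,\pexp\cdot\IG$, the step $\E[D_i\mid\hlessi,\elessi]\geq\pexp\cdot\IG$ via $\E[X]\geq\E[X\mid Y]\p(Y)$ for the event $Y=[e_i=1]$, the telescoping of the summed posterior KL-movements to a quantity bounded by $\mathrm{Ent}(w)$, and the transfer to $\p^{\pib}_{\mu}$ by the prior weight $\w(\mu)\w(\pih)$ on the true component of the mixture. The only difference is cosmetic (you change measure last rather than first, and phrase the telescoped sum as a mutual information), and your final constant matches the paper's $\etahov\,\mathrm{Ent}(w)/(\w(\pih)\w(\mu))$.
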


\begin{proofidea}
The expected information gain at any timestep is at least the expected information gain from exploring times the probability of exploring. This is proportional to the expected information gain squared, because the exploration probability is proportional to the expected information gain. But the agent begins with finite uncertainty (a finite entropy prior), so there is only finite information to gain.
\end{proofidea}

Some notation that will be used in the proof is as follows. For an arbitrary policy $\pi$, $\pi'$ is the policy that mimics $\pi$ if the latest $\ei = 1$, and mimics $\pistar$ otherwise. Note then that $\pib = (\pih) \primehov$. $\nutwid$ is the Bayes mixture over world-models, and $\pitwid$ is the Bayes mixture over human-mentor policies, defined such that $\Bayes(\cdot) = \p^{\pitwid}_{\nutwid}(\cdot)$.

To prove the Limited Exploration Theorem, we state an elementary lemma, proven in Appendix \ref{AppendixB}. It is essentially Bayes' rule, but modified slightly since non-exploratory episodes don't cause any updates to the posterior over the human mentor's policy.

\begin{restatable}{lemma}{lemone} \label{lem:bayesrule}
$$\w\left(\p^\pi_\nu | \hlessi, \elessi\right) = \frac{\w\left(\p^\pi_\nu\right) \p^{\pi \primehov}_\nu(\hlessi, \elessi)}{\p^{\pitwid \primehov}_{\nutwid}(\hlessi, \elessi)}$$
\end{restatable}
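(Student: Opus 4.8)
The plan is to compute both sides of the identity from the same building blocks and check they agree. First I would unfold $\p^{\pi\primehov}_\nu(\hlessi,\elessi)$ as a product over the past episodes $i' < i$. Episode $i'$ contributes: the probability that $e_{i'}$ took the value it did, namely $\pexp(h_{<i'},e_{<i'})$ or $1-\pexp(h_{<i'},e_{<i'})$ (these $\pexp$-factors depend only on the history, not on $(\pi,\nu)$); for each of its timesteps, the action probability under $\pi\primehov$; and for each timestep the observation--reward probability under $\nu$. Since $\pi\primehov$ follows $\pistar$ on the non-exploratory episodes and $\pi$ on the exploratory ones, the action factors split into a model-independent piece (the $\pistar$-factors on episodes with $e_{i'}=0$, which I fold together with the $\pexp$-factors into a common term $\kappa(\hlessi,\elessi)$) and the piece $f(\pi):=\prod_{(i',j')<(i,0),\,e_{i'}=1}\pi(a_{(i',j')}\,|\,h_{<(i',j')})$ — which is exactly the likelihood appearing in the given Bayes-rule form for the mentor-policy posterior. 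The observation--reward factors assemble into $g(\nu):=\prod_{(i',j')<(i,0)}\nu(o\!r_{(i',j')}\,|\,h_{<(i',j')},a_{(i',j')})$, which is the likelihood in the world-model posterior (the policy-dependent factors of Bayes' rule for $\nu$ cancel, which is why $\w(\nu\,|\,\hlessi)$ is policy-independent). Thus $\p^{\pi\primehov}_\nu(\hlessi,\elessi)=\kappa\,f(\pi)\,g(\nu)$ with $\kappa$ free of $(\pi,\nu)$.

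Next I would assemble the left-hand side. By the given posterior forms, $\w(\pi\,|\,\hlessi,\elessi)=\w(\pi)f(\pi)\big/\sum_{\pi''}\w(\pi'')f(\pi'')$ and $\w(\nu\,|\,\hlessi)=\w(\nu)g(\nu)\big/\sum_{\nu''}\w(\nu'')g(\nu'')$, so by the definition $\w(\p^\pi_\nu\,|\,\hlessi,\elessi)=\w(\pi\,|\,\hlessi,\elessi)\,\w(\nu\,|\,\hlessi)$ the left-hand side is $\w(\pi)\w(\nu)f(\pi)g(\nu)$ divided by the product of the two normalizers. Multiplying numerator and denominator by $\kappa$ turns the numerator into $\w(\p^\pi_\nu)\,\p^{\pi\primehov}_\nu(\hlessi,\elessi)$, using $\w(\p^\pi_\nu)=\w(\pi)\w(\nu)$ and the factorization above. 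For the denominator I would invoke the Bayes-mixture marginalization property $\p^{\pitwid}_{\nutwid}=\sum_{\pi,\nu}\w(\pi)\w(\nu)\p^\pi_\nu$ — which remains valid with the prime attached to both sides, since priming only substitutes the model-independent $\pistar$ on non-exploratory episodes and hence commutes with the mixture — to get $\p^{\pitwid\primehov}_{\nutwid}(\hlessi,\elessi)=\sum_{\pi'',\nu''}\w(\pi'')\w(\nu'')\,\kappa\,f(\pi'')g(\nu'')=\kappa\big(\sum_{\pi''}\w(\pi'')f(\pi'')\big)\big(\sum_{\nu''}\w(\nu'')g(\nu'')\big)$, i.e. exactly $\kappa$ times the product of the two normalizers. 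Cancelling, the left-hand side equals $\w(\p^\pi_\nu)\,\p^{\pi\primehov}_\nu(\hlessi,\elessi)\big/\p^{\pitwid\primehov}_{\nutwid}(\hlessi,\elessi)$, as claimed.

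The main obstacle is bookkeeping rather than depth: pinning down the factorization $\p^{\pi\primehov}_\nu(\hlessi,\elessi)=\kappa\,f(\pi)\,g(\nu)$ exactly, and in particular being careful that the $\pexp$-factors and the $\pistar$-factors on non-exploratory episodes are genuinely independent of $(\pi,\nu)$ — this is what reconciles the fact that the left-hand side conditions on $\elessi$ while the given Bayes-rule form for $\w(\pi\,|\,\cdot)$ records only the exploratory action factors — together with checking that priming commutes with the Bayes mixture so that $\p^{\pitwid\primehov}_{\nutwid}(\hlessi,\elessi)$ is precisely the normalizing constant produced on the left. Once those points are settled the remainder is elementary algebra, which is why the lemma is, as stated, essentially just Bayes' rule.
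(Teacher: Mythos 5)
Your proposal is correct and is essentially the paper's own argument: both rest on factorizing $\p^{\pi\primehov}_\nu(\hlessi,\elessi)$ into a $(\pi,\nu)$-independent part (the $\pexp$-factors and the $\pistar$-action factors on non-exploratory episodes) times the exploratory-action likelihood $f(\pi)$ and the observation--reward likelihood $g(\nu)$, and on observing that the mixture $\p^{\pitwid\primehov}_{\nutwid}$ is exactly that same common factor times the two Bayes normalizers. The paper merely runs the computation in the other direction (left-hand side to right-hand side, writing the normalizers as telescoping products of one-step mixture probabilities rather than as explicit sums), and your explicit check that priming commutes with the mixture is a point the paper leaves implicit.
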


Now we prove the Limited Exploration Theorem.
\begin{proof}[Proof of Theorem \ref{thm:limexp}]
We aim to show $\E^{\pib}_{\mu} \sum_{i = 0}^\infty \pexp(\hlessi, \elessi)^2 < \infty$. First we show that $\E^{\pib}_{\mu} \pexp(\hlessi, \elessi)^2$ is less than the expected information gain from episode $i$, within multiplicative constants. Recalling $\pib = (\pih)\primehov$, we begin:
\begin{align*}\label{ineq6}
    &\ \ \ \ \w(\pih)w({\mu}) \E^{(\pih) \primehov}_{\mu} \pexp(\hlessi, \elessi)^2
    \\
    &\lequal^{(a)} \sum_{\nu, \pi \in \M \times \Policies} \w(\pi)\w(\nu) \E^{\pi \primehov}_\nu \pexp(\hlessi, \elessi)^2
    \\
    &\equal^{(b)} \E^{\pitwid \primehov}_{\nutwid} \pexp(\hlessi, \elessi)^2
    \\
    &\lequal^{(c)} \E^{\pitwid \primehov}_{\nutwid} \pexp(\hlessi, \elessi) \etahov \IG(\hlessi, \elessi)
    \\
    &\equal^{(d)} \E_{\hlessi, \elessi \sim \p^{\pitwid \primehov}_{\nutwid}} \left[\pexp(\hlessi, \elessi) \etahov \IG(\hlessi, \elessi)\right]
    \\
    &\equal^{(e)} \etahov \E_{\hlessi, \elessi \sim \p^{\pitwid \primehov}_{\nutwid}} \biggr[ \pexp(\hlessi, \elessi) \E_{\hi \sim \Bayes(\cdot | \hlessi, \elessione)} \big[
    \\
    &\hspace{4mm} \sum_{(\nu, \pi) \in \M \times \Policies} \!\!\! \w\left(\p^\pi_\nu |h_{<i+1}, \elessione\right) \log \frac{\w\left(\p^\pi_\nu |h_{<i+1}, \elessione\right)}{\w\left(\p^\pi_\nu |\hlessi, \elessi\right)}\big]\biggr]
    \\
    &\equal^{(f)} \etahov \E_{\hlessi, \elessi \sim \p^{\pitwid \primehov}_{\nutwid}} \biggr[ \pexp(\hlessi, \elessi) \E_{\hi \sim \p^{\pitwid \primehov}_{\nutwid}(\cdot | \hlessi, \elessione)} \big[
    \\
    &\hspace{4mm} \sum_{(\nu, \pi) \in \M \times \Policies} \!\!\! \w\left(\p^\pi_\nu |h_{<i+1}, \elessione\right) \log \frac{\w\left(\p^\pi_\nu |h_{<i+1}, \elessione\right)}{\w\left(\p^\pi_\nu |\hlessi, \elessi\right)}\big]\biggr]
    \\
    &\lequal^{(g)} \etahov \E_{\hlessi, \elessi \sim \p^{\pitwid \primehov}_{\nutwid}} \biggr[  \E_{\hi, \ei \sim \p^{\pitwid \primehov}_{\nutwid}(\cdot | \hlessi, \elessi)} \big[
    \\
    &\hspace{4mm} \sum_{(\nu, \pi) \in \M \times \Policies} \!\!\!\!\! \w\left(\p^\pi_\nu |h_{<i+1}, e_{<i+1}\right) \log \frac{\w\left(\p^\pi_\nu |h_{<i+1}, e_{<i+1}\right)}{\w\left(\p^\pi_\nu |\hlessi, \elessi\right)}\big]\biggr]
    \\
    &\equal^{(h)} \etahov \E^{\pitwid \primehov}_{\nutwid} \hspace{-4mm} \sum_{(\nu, \pi) \in \M \times \Policies} \hspace{-4mm} \w\left(\p^\pi_\nu |h_{<i+1}, e_{<i+1}\right) \log \frac{\w\left(\p^\pi_\nu |h_{<i+1}, e_{<i+1}\right)}{\w\left(\p^\pi_\nu |\hlessi, \elessi\right)}
    \tagaligneq
\end{align*} 
(a) follows because each term in the sum on the r.h.s. is positive, and the l.h.s. is one of those terms. (b) follows from the definitions of $\pitwid$ and $\nutwid$. (c) follows because the exploration probability is less than or equal to $\etahov$ times the information gain, by definition. (d) is just a change of notation. (e) replaces the information gain with its definition, where conditioning on $\elessione$ indicates that $\ei = 1$ in that conditional. (f) follows because $\Bayes = \p^{\pitwid}_{\nutwid}$ and $\p^{\pitwid}_{\nutwid} = \p^{\pitwid \primehov}_{\nutwid}$ when $\ei = 1$. (g) follows because $\E[X] \geq \E[X|Y]\p(Y)$ for $X \geq 0$; in this case $Y = [\ei = 1]$, and $X$ is a KL-divergence. (h) condenses the two expectations into one.

Note that the right hand side is an information gain. Our $\IG(\hlessi, \elessi)$ is defined as the information gain \textit{if} the human mentor controls the episode. The right hand side of \ref{ineq6} is the expected information gain, full stop.

Now we show that the sum of the expected information gains is bounded by the entropy of the prior, notated $\textrm{Ent}(w)$.
\begin{align*}
    &\ \ \ \ \w(\pih)w({\mu}) \E^{(\pih) \primehov}_{\mu} \sum_{i=0}^\infty \pexp(\hlessi, \elessi)^2
    \\
    &\lequal^{(\ref{ineq6})} \etahov \sum_{i=0}^\infty \E^{\pitwid \primehov}_{\nutwid} \hspace{-4mm} \sum_{(\nu, \pi) \in \M \times \Policies} \hspace{-5mm} \w\left(\p^\pi_\nu |h_{<i+1}, e_{<i+1}\right) \log \frac{\w\left(\p^\pi_\nu |h_{<i+1}, e_{<i+1}\right)}{\w\left(\p^\pi_\nu |\hlessi, \elessi\right)}
    \\
    &\equal^{(a)} \etahov \sum_{i = 0}^\infty \sum_{(\nu, \pi) \in \M \times \Policies} \E^{\pitwid \primehov}_{\nutwid} \frac{\w\left(\p^\pi_\nu\right) \p^{\pi \primehov}_\nu(h_{<i+1}, e_{<i+1})}{\p^{\pitwid \primehov}_{\nutwid}(h_{<i+1}, e_{<i+1})} \ *
    \\
    &\hspace{8mm} \log \frac{\w\left(\p^\pi_\nu |h_{<i+1}, e_{<i+1}\right)}{\w\left(\p^\pi_\nu |\hlessi, \elessi\right)}
    \\
    &\equal^{(b)} \etahov \sum_{i = 0}^\infty \sum_{(\nu, \pi) \in \M \times \Policies} \E^{\pi \primehov}_{\nu} \w\left(\p^\pi_\nu\right) \log \frac{\w\left(\p^\pi_\nu |h_{<i+1}, e_{<i+1}\right)}{\w\left(\p^\pi_\nu |\hlessi, \elessi\right)}
    \\
    &\equal^{(c)} \lim_{N \to \infty} \etahov \sum_{(\nu, \pi) \in \M \times \Policies} \w\left(\p^\pi_\nu\right) \E^{\pi \primehov}_{\nu} \sum_{i = 0}^N \log \frac{\w\left(\p^\pi_\nu |h_{<i+1}, e_{<i+1}\right)}{\w\left(\p^\pi_\nu |\hlessi, \elessi\right)}
    \\
    &\equal^{(d)} \lim_{N \to \infty} \etahov \sum_{(\nu, \pi) \in \M \times \Policies} \w\left(\p^\pi_\nu\right) \E^{\pi \primehov}_{\nu} \log \frac{\w\left(\p^\pi_\nu |h_{<(N+1, 0)}, e_{<N}\right)}{\w\left(\p^\pi_\nu |h_{<(0, 0)}, e_{<0}\right)}
    \\
    &\lequal^{(e)} \lim_{N \to \infty} \etahov \sum_{(\nu, \pi) \in \M \times \Policies} \w\left(\p^\pi_\nu\right) \log \frac{1}{\w\left(\p^\pi_\nu\right)}
    \\
    &\equal^{(f)} \etahov \ \textrm{Ent}(w) \lthan^{(g)} \infty
    \tagaligneq
\end{align*}
(a) follows from Lemma \ref{lem:bayesrule}, and reordering the sum and the expectation. (b) follows from the definition of the expectation, and canceling. (c) follows from the definition of an infinite sum, and rearranging. (d) follows from cancelling the numerator in the $i$\textsuperscript{th} term of the sum with the denominator in $i+1$\textsuperscript{th} term. (e) follows from the posterior weight on $\p^\pi_\nu$ being less than or equal to 1; note that $\w(\p^\pi_\nu |h_{<(0, 0)}, e_{<0}) = \w(\p^\pi_\nu)$ because nothing is actually being conditioned on. (f) is just the definition of the entropy, and $w$ is constructed to satisfy (g).

Rearranging this gives the theorem: $\E^{\pib}_{\mu} \sum_{i = 0}^\infty \pexp(\hlessi, \elessi)^2 \leq \frac{\etahov \textrm{Ent}(w)}{\w(\pih)w({\mu})} < \infty$

This proof was inspired in part by \citepos{Hutter:13ksaprob} proofs of Theorems 2 and 5.
\end{proof}

Note that this result essentially means the expectation of $\pexp\left(\hlessi, \elessi\right) \in o(1/\sqrt{i})$. This result is independently interesting as one solution to the problem of safe exploration with limited oversight in non-ergodic environments, which \citet{amodei_olah_2016} discuss.

The On-Human-Policy and On-Star-Policy Optimal Prediction Theorems state that predictions according to BoMAI's maximum a posteriori world-model approach the objective probabilities of the events of the episode, when actions are sampled from either the human mentor's policy or from exploiting-BoMAI's policy. $\hoverline_{i \hspace{0mm}}$ denotes a possible interaction history for episode $i$, and recall $\hlessi$ is the actual interaction history up until then. Recall $\pih$ is the human mentor's policy, and $\nuhati$ is BoMAI's maximum a posteriori world-model for episode $i$. w.$\p^{\pib}_{\mu}$-p.1 means with probability 1 when actions are sampled from $\pib$ and observations and rewards are sampled from the true environment ${\mu}$.

\begin{restatable}[On-Human-Policy Optimal Prediction]{theorem}{thmtwo} \label{thm:onhumanpolicy}
\begin{equation*}
    \lim_{i \to \infty} \max_{\hoverline_{i \hspace{0mm}}} \vb \p^{\pih}_{{\mu}}\left(\hoverline_{i \hspace{0mm}} \va \hlessi\right) - \p^{\pih}_{\nuhati}\left(\hoverline_{i \hspace{0mm}} \va \hlessi\right) \vb = 0 \ \ \textrm{w.$\p^{\pib}_{\mu}$-p.1}
\end{equation*}
\end{restatable}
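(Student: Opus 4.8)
The plan is to show that, as $i\to\infty$, the law of episode $i$ under $(\pih,\mu)$ and the law of episode $i$ under $(\pih,\nuhati)$ both converge in total variation to $\Bayes(\cdot\mid\hlessi,\elessione)$; the triangle inequality together with $\max_{\hoverline_i}|\cdot|\le\|\cdot\|_1=2\,\mathrm{TV}$ then gives the theorem. The engine of the convergence is that the expected information gain vanishes: by Theorem~\ref{thm:limexp}, $\E^{\pib}_\mu\sum_i\pexp(\hlessi,\elessi)^2<\infty$, so $\pexp(\hlessi,\elessi)\to 0$ with $\p^{\pib}_\mu$-probability $1$, and since $\pexp=\min\{1,\etahov\IG\}$ this forces $\IG(\hlessi,\elessi)\to 0$ almost surely.

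Before using that, I would establish two almost-sure lower bounds on posterior weights. For the world-model posterior, $\w(\mu\mid\hlessi)=\w(\mu)/L_i$ where $L_i:=\p^\pi_{\nutwid}(\hlessi)/\p^\pi_\mu(\hlessi)$; the policy factors cancel so $L_i$ does not depend on $\pi$, and a short computation shows $L_i$ is a nonnegative $\p^{\pib}_\mu$-martingale, hence converges to a finite limit and $\sup_i L_i<\infty$ a.s., giving $\inf_i\w(\mu\mid\hlessi)>0$ a.s. For the mentor-policy posterior, $\w(\pih\mid\hlessi,\elessi)=\w(\pih)/K_i$ where $K_i$ is a nonnegative martingale with respect to the sub-filtration generated by the exploratory timesteps only (the posterior over $\Policies$ is updated only there, and at exploratory timesteps actions are genuinely drawn from $\pih$, so every observed action has positive $\pih$-probability); again $\sup_i K_i<\infty$ and $\inf_i\w(\pih\mid\hlessi,\elessi)>0$ a.s. Since the joint posterior factorizes, $\w(\p^{\pih}_\mu\mid\hlessi,\elessi)=\w(\mu\mid\hlessi)\,\w(\pih\mid\hlessi,\elessi)\ge c(\omega)>0$ for all large $i$, and because $\nuhati\in\argmax_\nu\w(\nu\mid\hlessi)$ we also get $\w(\p^{\pih}_{\nuhati}\mid\hlessi,\elessi)\ge\w(\mu\mid\hlessi)\,\w(\pih\mid\hlessi,\elessi)\ge c(\omega)$.

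The key algebraic identity is that the information gain equals a posterior-weighted average of forward KL-divergences of mentor-controlled episode-predictions from the Bayes prediction:
\[
\IG(\hlessi,\elessi)=\!\!\!\sum_{(\nu,\pi)\in\M\times\Policies}\!\!\!\w\!\left(\p^\pi_\nu\mid\hlessi,\elessi\right)\KL\!\left(\p^{\pi}_\nu(\hi\mid\hlessi)\,\big\|\,\Bayes(\hi\mid\hlessi,\elessione)\right),
\]
where $\p^{\pi}_\nu(\hi\mid\hlessi)$ is the law of episode $i$ when mentor policy $\pi$ controls it in world $\nu$; this follows from the definition of $\IG$, the one-step form of Lemma~\ref{lem:bayesrule} (the posterior update after an exploratory episode), and the symmetry of mutual information. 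Keeping only the $(\mu,\pih)$ term, resp.\ the $(\nuhati,\pih)$ term, and using the lower bounds $c(\omega)$ from the previous paragraph, we obtain for all large $i$ that $\KL(\p^{\pih}_\mu(\hi\mid\hlessi)\,\|\,\Bayes)$ and $\KL(\p^{\pih}_{\nuhati}(\hi\mid\hlessi)\,\|\,\Bayes)$ are each at most $\IG(\hlessi,\elessi)/c(\omega)\to 0$. Pinsker's inequality converts these into $\mathrm{TV}$-bounds, and the triangle inequality delivers $\max_{\hoverline_i}\big|\p^{\pih}_\mu(\hoverline_i\mid\hlessi)-\p^{\pih}_{\nuhati}(\hoverline_i\mid\hlessi)\big|\to 0$ with $\p^{\pib}_\mu$-probability $1$.

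The crux is the passage from the Bayes \emph{mixture} (which $\IG$ directly controls) to the \emph{MAP} world-model $\nuhati$ appearing in the statement; this rests on the elementary inequality $\w(\nuhati\mid\hlessi)\ge\w(\mu\mid\hlessi)$ combined with the almost-sure positive lower bound on $\w(\mu\mid\hlessi)$. The most delicate supporting step is that same kind of lower bound for the mentor-policy posterior, which must survive even when exploratory episodes occur only finitely often; it is handled by working with the martingale on the sub-filtration of exploratory timesteps rather than the full one, so that non-exploratory episodes contribute no factors at all.
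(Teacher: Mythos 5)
Your proposal is correct, and it reaches the theorem by a genuinely different route from the paper. The paper argues by contraposition: it supposes the predictions of $\mu$ and $\nuhati$ differ by $\varepsilon$ on some $\hoverline_i$, shows $\Bayes(\hoverline_i \mid \hlessi) \geq z(\omegahov)\varepsilon$, shows the posterior on one of $\{\mu, \nuhati\}$ would shift by at least $(1-\sqrt{1-\varepsilon})z(\omegahov)$ upon observing $\hoverline_i$, and then lower-bounds $\IG$ by $\tfrac{1}{2}z(\omegahov)^3\varepsilon(1-\sqrt{1-\varepsilon})^2$ via the entropy inequality, contradicting $\pexp \to 0$. You instead go forward: the mutual-information symmetry identity rewrites $\IG$ as $\sum_{(\nu,\pi)} \w(\p^\pi_\nu \mid \hlessi, \elessi)\, \KL(\p^\pi_\nu(\cdot\mid\hlessi)\,\|\,\Bayes(\cdot\mid\hlessi,\elessione))$, after which dropping all but the $(\mu,\pih)$ and $(\nuhati,\pih)$ terms, applying Pinsker, and using the triangle inequality finishes the job. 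Both routes rest on the same two supporting facts — the almost-sure positive lower bound on $\w(\p^{\pih}_\mu \mid \hlessi, \elessi)$ (the paper's Lemma~\ref{lem:credenceontruth}, which you re-derive by splitting the likelihood-ratio process into a world-model factor and a mentor-policy factor on the exploratory sub-filtration; note your $L_i$ and $K_i$ are in general nonnegative \emph{super}martingales rather than martingales, which suffices) and the MAP inequality $\w(\nuhati\mid\hlessi)\geq\w(\mu\mid\hlessi)$. Your version is arguably tidier and yields an explicit rate, $\mathrm{TV} \leq O(\sqrt{\IG(\hlessi,\elessi)/c(\omegahov)})$, whereas the paper's version produces an explicit lower bound on $\pexp$ as a function of the prediction discrepancy; the verification of the identity itself requires the one-episode form of Lemma~\ref{lem:bayesrule}, so neither route avoids that bookkeeping.
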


\begin{proofidea}
BoMAI learns about the effects of following $\pih$ while acting according to $\pib$ because $\pib$ mimics $\pih$ enough. If exploration probability goes to $0$, the agent does not expect to gain (much) information from following the human mentor's policy, which can only happen if it has (approximately) accurate beliefs about the consequences of following the human mentor's policy. Note that the agent cannot determine the factor by which its expected information gain bounds its prediction error.
\end{proofidea}

For the proof, we require the following Lemma, proven in Appendix \ref{AppendixB}:
\begin{restatable}{lemma}{lemtwo} \label{lem:credenceontruth}
The posterior probability mass on the truth is bounded below by a positive constant with probability 1.
$$\inf_{i \in \mathbb{N}} \w\left(\p^{\pih}_{\mu} \vb \hlessi, \elessi\right) > 0 \ \ \ \textrm{w.$\p^{\pib}_{\mu}$-p.1}$$
\end{restatable}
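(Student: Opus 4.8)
The plan is to realize the posterior mass on the truth as a fixed positive constant divided by a nonnegative supermartingale under $\p^{\pib}_{\mu}$, and then invoke almost-sure convergence of such processes.

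\textbf{Step 1 (rewrite the posterior as a reciprocal likelihood ratio).} Apply Lemma \ref{lem:bayesrule} with $\nu = \mu$ and $\pi = \pih$. Since $\pib = (\pih)\primehov$ we have $\p^{\pih \primehov}_{\mu} = \p^{\pib}_{\mu}$, so
\begin{equation*}
\w\left(\p^{\pih}_{\mu} \vb \hlessi, \elessi\right) = \w\left(\p^{\pih}_{\mu}\right)\cdot \frac{\p^{\pib}_{\mu}(\hlessi, \elessi)}{\p^{\pitwid \primehov}_{\nutwid}(\hlessi, \elessi)} = \frac{\w\left(\p^{\pih}_{\mu}\right)}{M_i}, \qquad M_i := \frac{\p^{\pitwid \primehov}_{\nutwid}(\hlessi, \elessi)}{\p^{\pib}_{\mu}(\hlessi, \elessi)}.
\end{equation*}
Here $M_i$ is well-defined w.$\p^{\pib}_{\mu}$-p.1 because the realized history-with-exploration-bits has positive $\p^{\pib}_{\mu}$-probability, and $M_0 = 1$ since nothing is conditioned on. Because $\w(\p^{\pih}_{\mu}) > 0$ is a fixed constant, it suffices to show $\sup_{i} M_i < \infty$ w.$\p^{\pib}_{\mu}$-p.1; this yields $\inf_{i} \w(\p^{\pih}_{\mu} \vb \hlessi, \elessi) = \w(\p^{\pih}_{\mu})/\sup_i M_i > 0$ almost surely.

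\textbf{Step 2 (supermartingale property).} $M_i$ is exactly the likelihood ratio $Q(\text{first } i \text{ episodes})/P(\text{first } i \text{ episodes})$ where $P$ is the measure over the sequence of pairs $(\hi, \ei)$ induced by running $\pih$ against $\mu$ (with $\ei \sim \mathrm{Bernoulli}(\pexp(\hlessi,\elessi))$), and $Q$ is the corresponding ``prior predictive'' measure $\p^{\pitwid\primehov}_{\nutwid}$, which by the mixture identity $\sum_{\nu,\pi}\w(\p^\pi_\nu)\p^{\pi\primehov}_\nu = \p^{\pitwid\primehov}_{\nutwid}$ (the same identity used in steps (a),(b) of the proof of Theorem \ref{thm:limexp}) dominates $\w(\p^{\pih}_{\mu})\,P$ as measures, so in particular $M_i \ge \w(\p^{\pih}_{\mu}) > 0$ for all $i$. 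The structural point making $M_i$ a one-step-predictable likelihood ratio is that both $P$ and $Q$ generate $\ei$ from the \emph{same} law $\mathrm{Bernoulli}(\pexp(\hlessi,\elessi))$ --- $\pexp$ depends only on the interaction history, not on the world-model or mentor policy --- and, conditioned on $\elessi$, on $\{\ei=1\}$ draw actions from $\pih$ vs.\ $\pitwid$ and observations/rewards from $\mu$ vs.\ $\nutwid$, and on $\{\ei=0\}$ both follow $\pistar$ and $\mu$ resp.\ $\nutwid$. The standard likelihood-ratio computation then gives $\E_{\p^{\pib}_{\mu}}[M_{i+1}\mid \hlessi,\elessi] = M_i \cdot \p^{\pitwid\primehov}_{\nutwid}\big(\text{support of the realized conditional of } \p^{\pib}_{\mu} \bigm| \hlessi,\elessi\big) \le M_i$, i.e.\ $(M_i)_{i\ge 0}$ is a nonnegative supermartingale with respect to the episode filtration under $\p^{\pib}_{\mu}$ (it is only a supermartingale, not a martingale, because $\nutwid,\pitwid$ may place mass outside the supports of $\mu,\pih$).

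\textbf{Step 3 (conclude).} By Doob's supermartingale convergence theorem, the nonnegative supermartingale $M_i$ converges $\p^{\pib}_{\mu}$-almost surely to a finite limit; a convergent real sequence is bounded, so $\sup_{i} M_i < \infty$ w.$\p^{\pib}_{\mu}$-p.1, and Step 1 finishes the proof.

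The one place that needs care is Step 2: verifying that, under $\p^{\pib}_{\mu}$, the joint one-episode increments of $(\hi,\ei)$ for the two measures line up so that $M_{i+1}/M_i$ is genuinely a predictable likelihood ratio and the conditional-expectation telescoping is legitimate. Everything else --- Lemma \ref{lem:bayesrule}, $\pib = (\pih)\primehov$, the mixture identity, and the convergence theorem --- is immediate.
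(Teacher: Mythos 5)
Your proposal is correct and is essentially the paper's own proof: the paper defines $z_i := \w\left(\p^{\pih}_{\mu} \vb \hlessi, \elessi\right)^{-1}$, which equals your $M_i$ divided by the constant $\w\left(\p^{\pih}_{\mu}\right)$, verifies the supermartingale property by the same one-step likelihood-ratio telescoping via Lemma \ref{lem:bayesrule} (with the inequality arising exactly as you say, from the mixture placing mass where $\p^{(\pih)\primehov}_{\mu}$ does not), and concludes by the supermartingale convergence theorem. The only cosmetic difference is that the paper phrases the conclusion as ``$\limsup z_i = \infty$ has probability $0$'' rather than ``a convergent sequence is bounded.''
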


\begin{proof}[Proof of Theorem \ref{thm:onhumanpolicy}]
We show that when the absolute difference between the above probabilities is larger than $\varepsilon$, the exploration probability is larger than $\varepsilon'$, a non-decreasing function of $\varepsilon$, with probability 1. Since the exploration probability goes to $0$ with probability $1$, so does this difference. We let $z(\omegahov)$ denote $\inf_{i \in \mathbb{N}} \w\left(\p^{\pih}_{\mu} \vb \hlessi, \elessi\right)$, where $\omegahov$ is the infinite interaction history, and $\hlessi$ and $\elessi$ come from the the first $i$ episodes of $\omega$.

Suppose for some $\hoverline_{i \hspace{0mm}}$, which will stay fixed for the remainder of the proof, that
\begin{equation}
    \vb \p^{\pih}_{{\mu}}\left(\hoverline_{i \hspace{0mm}} \va \hlessi\right) - \p^{\pih}_{\nuhati}\left(\hoverline_{i \hspace{0mm}} \va \hlessi\right) \vb \geq \varepsilon > 0
\end{equation}
Then at least one of the terms is greater than $\varepsilon$ since both are non-negative. Suppose it is the ${\mu}$ term. Then,
\begin{align*}
    \Bayes\left(\hoverline_{i \hspace{0mm}} \va \hlessi\right)
    &\geq \w\left(\p^{\pih}_{\mu} \vb \hlessi, \elessi\right) \p^{\pih}_{\mu}\left(\hoverline_{i \hspace{0mm}} \va \hlessi\right)
    \\
    &\geq z(\omegahov) \varepsilon
    \tagaligneq\label{EqrefA}
\end{align*}
Suppose instead it is the $\nuhati$ term.
\begin{align*}
    \Bayes\left(\hoverline_{i \hspace{0mm}} \va \hlessi\right)
    &\geq \w\left(\p^{\pih}_{\nuhati} \vb \hlessi, \elessi\right) \p^{\pih}_{\nuhati}\left(\hoverline_{i \hspace{0mm}} \va \hlessi\right)
    \\
    &\gequal^{(a)} \w\left(\p^{\pih}_{{\mu}} \vb \hlessi, \elessi\right) \p^{\pih}_{\nuhati}\left(\hoverline_{i \hspace{0mm}} \va \hlessi\right)
    \\
    &\geq z(\omegahov) \varepsilon
    \tagaligneq\label{EqrefC}
\end{align*}
where $(a)$ follows from the fact that $\nuhati$ is maximum a posteriori: $\frac{\w\left(\p^{\pih}_{\nuhati} \va \hlessi, \elessi\right)}{\w\left(\p^{\pih}_{{\mu}} \va \hlessi, \elessi\right)} = \frac{\w\left(\nuhati | \hlessi\right)}{w({\mu} | \hlessi)} \geq 1$.

Next, we consider how the posterior on ${\mu}$ and $\nuhati$ changes if the interaction history for episode $i$ is $\hoverline_{i \hspace{0mm}}$. Assign $\nu_0$ and $\nu_1$ to ${\mu}$ and  $\nuhati$ so that $\p^{\pih}_{\nu_0}\left(\hoverline_{i \hspace{0mm}} \va \hlessi\right) < \p^{\pih}_{\nu_1}\left(\hoverline_{i \hspace{0mm}} \va \hlessi\right)$. Let $o\!r_i$ denote $o_i, r_i$.

\begin{align*}
\frac{\w\left(\nu_1 | \hlessi\hoverline_{i \hspace{0mm}}\right)}
{\w\left(\nu_0 | \hlessi\hoverline_{i \hspace{0mm}}\right)}
&\equal^{(a)}
\frac{\w\left(\nu_1 | \hlessi\right) \nu_1\left(\overline{o\!r}_{i \hspace{0mm}} | \hlessi\overline{a}_{i \hspace{0mm}}\right)}
{\w\left(\nu_0 | \hlessi\right) \nu_0\left(\overline{o\!r}_{i \hspace{0mm}} | \hlessi\overline{a}_{i \hspace{0mm}}\right)}
\\
&\equal
\frac{\w\left(\nu_1 | \hlessi\right) \p^{\pih}_{\nu_1}\left(\hoverline_{i \hspace{0mm}}|\hlessi\right)}
{\w\left(\nu_0 | \hlessi\right) \p^{\pih}_{\nu_0}\left(\hoverline_{i \hspace{0mm}}|\hlessi\right)}
\\
&\gequal^{(b)}
\frac{\w\left(\nu_1 | \hlessi\right)}{\w\left(\nu_0 | \hlessi\right)} \frac{1}{1-\varepsilon}
\tagaligneq
\end{align*}
where (a) follows from Bayes' rule, and (b) follows because the ratio of two numbers between 0 and 1 that differ by at least $\varepsilon$ is at least $1/(1-\varepsilon)$, and the $\nu_1$ term is the larger of the two.

Thus, either
\begin{equation}
    \frac{\w\left(\nu_1 | \hlessi\hoverline_{i \hspace{0mm}}\right)}{\w\left(\nu_1 | \hlessi\right)} \geq \sqrt{\frac{1}{1-\varepsilon}} \ \ \ \textrm{or} \ \ \ \frac{\w\left(\nu_0 | \hlessi\hoverline_{i \hspace{0mm}}\right)}{\w\left(\nu_0 | \hlessi\right)} \leq \sqrt{1-\varepsilon}
\end{equation}
In the former case, $\w\left(\nu_1 | \hlessi\hoverline_{i \hspace{0mm}}\right) - \w\left(\nu_1 | \hlessi\right) \geq \left(\sqrt{1/(1-\varepsilon)} - 1\right)\w\left(\nu_1 | \hlessi\right) \geq \left(\sqrt{1/(1-\varepsilon)} - 1\right)z(\omegahov)$. Similarly, in the latter case, $\w(\nu_0 | \hlessi) - \w\left(\nu_0 | \hlessi\hoverline_{i \hspace{0mm}}\right) \geq \left(1 - \sqrt{1-\varepsilon}\right)z(\omegahov)$. Let $\nu_2$ be either $\nu_0$ or $\nu_1$ for whichever satisfies this constraint (and pick arbitrarily if both do). Then in either case,
\begin{equation}\label{EqrefB}
    \va \w\left(\nu_2 | \hlessi\right) - \w\left(\nu_2 | \hlessi\hoverline_{i \hspace{0mm}}\right) \va \geq \left(1 - \sqrt{1-\varepsilon}\right)z(\omegahov)
\end{equation}

Finally, since the posterior changes by an amount that is bounded below with a probability (according to $\Bayes$) that is bounded below, the expected information gain is bounded below, where all bounds are strictly positive with probability 1:
\begin{align*}
    \IG(\hlessi&, \elessi) = \E_{\hi \sim \Bayes(\cdot | \hlessi, \elessi)} \biggr[
    \\
    &\sum_{(\nu, \pi) \in \M \times \Policies} \w\left(\p^\pi_\nu \va h_{<i+1}, \elessione\right) \log \frac{\w\left(\p^\pi_\nu \va h_{<i+1}, \elessione\right)}{\w\left(\p^\pi_\nu \va \hlessi, \elessi\right)}\biggr]
    \\
    &\gequal^{(a)} \Bayes\left(\hoverline_{i \hspace{0mm}} \va \hlessi\right) \sum_{(\nu, \pi) \in \M \times \Policies} \w\left(\p^\pi_\nu \va \hlessi\hoverline_{i \hspace{0mm}}, \elessione\right) \ *
    \\
    &\hspace{8mm} \log \frac{\w\left(\p^\pi_\nu \va \hlessi\hoverline_{i \hspace{0mm}}, \elessione\right)}{\w\left(\p^\pi_\nu \va \hlessi, \elessi\right)}
    \\
    &\gequal^{(b)} z(\omegahov) \varepsilon \sum_{(\nu, \pi) \in \M \times \Policies} \w\left(\p^\pi_\nu \va \hlessi\hoverline_{i \hspace{0mm}}, \elessione\right) \ *
    \\
    &\hspace{8mm} \log \frac{\w\left(\p^\pi_\nu \va \hlessi\hoverline_{i \hspace{0mm}}, \elessione\right)}{\w\left(\p^\pi_\nu \va \hlessi, \elessi\right)}
    \\
    &\equal^{(c)} z(\omegahov) \varepsilon \hspace{-4mm} \sum_{(\nu, \pi) \in \M \times \Policies}\hspace{-4mm}  \w(\nu |\hlessi\hoverline_{i \hspace{0mm}})\w\left(\pi |\hlessi\hoverline_{i \hspace{0mm}}, \elessione\right) \ *
    \\
    &\hspace{8mm} \log \frac{\w\left(\nu |\hlessi\hoverline_{i \hspace{0mm}}\right)\w\left(\pi |\hlessi\hoverline_{i \hspace{0mm}}, \elessione\right)}{\w\left(\nu |\hlessi\right)\w\left(\pi |\hlessi, \elessi\right)}
    \\
    &\equal z(\omegahov) \varepsilon \biggr[\sum_{\nu \in \M} \w\left(\nu |\hlessi\hoverline_{i \hspace{0mm}}\right) \log \frac{\w\left(\nu |\hlessi\hoverline_{i \hspace{0mm}}\right)}{\w\left(\nu |\hlessi\right)} +
    \\
    &\hspace{7mm} \sum_{\pi \in \Policies} \w\left(\pi |\hlessi\hoverline_{i \hspace{0mm}}, \elessione\right)\log \frac{\w\left(\pi |\hlessi\hoverline_{i \hspace{0mm}}, \elessione\right)}{\w\left(\pi |\hlessi, \elessi\right)}\biggr]
    \\
    &\gequal^{(d)} z(\omegahov) \varepsilon \sum_{\nu \in \M} \w\left(\nu |\hlessi\hoverline_{i \hspace{0mm}}\right) \log \frac{\w\left(\nu |\hlessi\hoverline_{i \hspace{0mm}}\right)}{\w\left(\nu |\hlessi\right)}
    \\
    &\gequal^{(e)} z(\omegahov) \varepsilon \sum_{\nu \in \M} \frac{1}{2}\left[\w\left(\nu |\hlessi\hoverline_{i \hspace{0mm}}\right) - \w\left(\nu |\hlessi\right)\right]^2
    \\
    &\gequal^{(f)} z(\omegahov) \varepsilon \frac{1}{2}\left[\w\left(\nu_2 |\hlessi\hoverline_{i \hspace{0mm}}\right) - \w\left(\nu_2 |\hlessi\right)\right]^2
    \\
    &\gequal^{(g)} \frac{1}{2} z(\omegahov)^3 \varepsilon \left(1 - \sqrt{1-\varepsilon}\right)^2
    \tagaligneq
\end{align*}
where (a) follows from $\E[X] \geq \E[X|Y]\p(Y)$ for non-negative $X$, and the non-negativity of the KL-divergence, (b) follows from Inequalities \ref{EqrefA} and \ref{EqrefC}, (c) follows from the posterior over $\nu$ not depending on $\elessi$, (d) follows from dropping the second term, which is non-negative as a KL-divergence, (e) follows from the entropy inequality \citep[Lemma 3.11]{Hutter:04uaibook}, also proven for the binary case in \citep{li2008introduction}, (f) follows from dropping all terms in the sum besides $\nu_2$, and (g) follows from Inequality \ref{EqrefB}.

This implies $\pexp(\hlessi, \elessi) \geq \min \{1, \frac{1}{2} \etahov z(\omegahov)^3 \varepsilon (1 - \sqrt{1-\varepsilon})^2 \}$. With probability 1, $z(\omegahov) > 0$ by Lemma \ref{lem:credenceontruth}, and with probability 1, $\pexp(\hlessi, \elessi)$ is not greater than $\varepsilon'$ infinitely often with probability 1, for all $\varepsilon' > 0$ by Theorem \ref{thm:limexp}. Therefore, with probability 1, $\max_{\hoverline_{i \hspace{0mm}}} \va \p^{\pih}_{{\mu}}\left(\hoverline_{i \hspace{0mm}} | \hlessi\right) - \p^{\pih}_{\nuhati}\left(\hoverline_{i \hspace{0mm}}|\hlessi\right) \va$ is not greater than $\varepsilon$ infinitely often, for all $\varepsilon > 0$, which completes the proof. Note that the citation of Lemma \ref{lem:credenceontruth} is what restricts this result to $\pih$.
\end{proof}

Next, recall $\pistar$ is BoMAI's policy when not exploring, which does optimal planning with respect to $\nuhati$. The following theorem is identical to the above, with $\pistar$ substituted for $\pih$.

\begin{restatable}[On-Star-Policy Optimal Prediction]{theorem}{thmthree}
\begin{equation*}
    \lim_{i \to \infty} \max_{\hoverline_{i \hspace{0mm}}} \vb \p^{\pistar}_{{\mu}}\left(\hoverline_{i \hspace{0mm}} \va \hlessi\right) - \p^{\pistar}_{\nuhati}\left(\hoverline_{i \hspace{0mm}} \va \hlessi\right) \vb = 0 \ \ \textrm{w.$\p^{\pib}_{\mu}$-p.1}
\end{equation*}
\end{restatable}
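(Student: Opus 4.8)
The plan is to run the proof of Theorem~\ref{thm:onhumanpolicy} almost verbatim, making two substitutions. First, where that proof invokes the \emph{exploration} budget of Theorem~\ref{thm:limexp} (BoMAI expects only finitely much information from mentor episodes), I will instead invoke the finiteness of the entropy of $w$ over $\M$: the information BoMAI expects to gain \emph{about its world-model} from exploit (non-exploratory) episodes is finite, and since $\pexp(\hlessi,\elessi)\to 0$ w.$\p^{\pib}_\mu$-p.1, exploit episodes are eventually the norm. Second, where that proof uses $\Bayes(\cdot\mid\hlessi,\elessi)=\p^{\pitwid}_{\nutwid}(\cdot\mid\hlessi)$ (the Bayesian prediction for a mentor-controlled episode), I will use $\p^{\pistar}_{\nutwid}(\cdot\mid\hlessi)=\sum_{\nu\in\M}w(\nu\mid\hlessi)\,\p^{\pistar}_{\nu}(\cdot\mid\hlessi)$ (the Bayesian prediction for an exploit episode). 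The constant $z:=\inf_i w(\mu\mid\hlessi)$ plays the role that $z(\omegahov)$ did; it is positive w.$\p^{\pib}_\mu$-p.1 because $w(\mu\mid\hlessi)\geq w(\pih\mid\hlessi,\elessi)\,w(\mu\mid\hlessi)=w(\p^{\pih}_\mu\mid\hlessi,\elessi)$ and Lemma~\ref{lem:credenceontruth} bounds the latter below --- so that lemma is still exactly what is needed, now only through the weaker consequence that the posterior on the true \emph{environment} stays bounded away from $0$.

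Step one (the budget). Let $G_i:=\E_{\hoverline_i\sim\p^{\pistar}_{\nutwid}(\cdot\mid\hlessi)}\sum_{\nu\in\M}w(\nu\mid\hlessi\hoverline_i)\log\tfrac{w(\nu\mid\hlessi\hoverline_i)}{w(\nu\mid\hlessi)}$ be the expected information an exploit episode provides about the world-model. Since the world-model posterior updates in every episode --- only the posterior over $\Policies$ is frozen when $\ei=0$ --- the quantity $(1-\pexp(\hlessi,\elessi))\,G_i$ is at most the unconditional expected world-model information gain from episode $i$ under BoMAI's actual behaviour. Summing over $i$ and telescoping exactly as in the second displayed chain of the proof of Theorem~\ref{thm:limexp}, but tracking only the world-model factor (so the bound is the world-model marginal entropy, at most $\textrm{Ent}(w)$), and then passing from the Bayes mixture to $\mu$ at the usual multiplicative cost $1/(w(\mu)w(\pih))$, gives $\E^{\pib}_\mu\sum_i(1-\pexp(\hlessi,\elessi))\,G_i\leq\textrm{Ent}(w)/(w(\mu)w(\pih))<\infty$. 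Hence $\sum_i(1-\pexp(\hlessi,\elessi))\,G_i<\infty$ w.$\p^{\pib}_\mu$-p.1, and since $\pexp(\hlessi,\elessi)\to 0$ w.p.1 by Theorem~\ref{thm:limexp}, eventually $1-\pexp\geq\tfrac12$, so $G_i\to 0$ w.$\p^{\pib}_\mu$-p.1.

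Step two (a prediction error forces $G_i$ to be large). Suppose $\bigl|\p^{\pistar}_\mu(\hoverline_i\mid\hlessi)-\p^{\pistar}_{\nuhati}(\hoverline_i\mid\hlessi)\bigr|\geq\varepsilon$ for some $\hoverline_i$. One of the two terms is $\geq\varepsilon$, and since $\nuhati$ is maximum a posteriori ($w(\nuhati\mid\hlessi)\geq w(\mu\mid\hlessi)\geq z$), in either case $\p^{\pistar}_{\nutwid}(\hoverline_i\mid\hlessi)\geq z\varepsilon$; this is the analogue of Inequalities~(\ref{EqrefA}) and~(\ref{EqrefC}) with $\p^{\pistar}_{\nutwid}$ in place of $\Bayes$. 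The likelihood-ratio argument leading to Inequality~(\ref{EqrefB}) then applies unchanged --- it only involves the world-model posterior and the two competing models $\mu$ and $\nuhati$, each with posterior weight $\geq z$ --- so some model's posterior weight moves by at least $(1-\sqrt{1-\varepsilon})z$ upon observing $\hoverline_i$. Running the final $\IG$ chain of the proof of Theorem~\ref{thm:onhumanpolicy} with these substitutions and with the (now absent) mentor-policy term omitted yields $G_i\geq\tfrac12 z^3\varepsilon(1-\sqrt{1-\varepsilon})^2$. Together with $G_i\to 0$ w.p.1, $z>0$ w.p.1, and the continuity and strict monotonicity on $[0,1]$ of $\varepsilon\mapsto\tfrac12 z^3\varepsilon(1-\sqrt{1-\varepsilon})^2$, this forces $\max_{\hoverline_i}\bigl|\p^{\pistar}_\mu(\hoverline_i\mid\hlessi)-\p^{\pistar}_{\nuhati}(\hoverline_i\mid\hlessi)\bigr|\to 0$ w.$\p^{\pib}_\mu$-p.1.

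I expect the main obstacle to be the first substitution: recognising that the right ``fuel tank'' for the $\pistar$ version is the finite entropy of the world-model prior rather than the exploration schedule, and then verifying that the telescoping in Theorem~\ref{thm:limexp}'s proof still closes when one tracks only the world-model posterior and inserts the $1-\pexp$ weight --- in particular that the world-model posterior genuinely keeps moving on exploit episodes (it does, since freezing affects only the posterior over $\Policies$), and that reducing from $\p^{\pib}_{\nutwid}$ to $\p^{\pib}_\mu$ still costs only the constant $1/(w(\mu)w(\pih))$. Step two, by contrast, is essentially a transcription of the corresponding part of the proof of Theorem~\ref{thm:onhumanpolicy}.
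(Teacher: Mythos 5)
Your proof is correct, but it takes a genuinely different route from the paper's. The paper proves this theorem in a few lines by reduction to an existing result: it defines an ``outside observer'' with model class $\{\p^{\pib}_\nu : \nu \in \M\}$ and prior $w'(\p^{\pib}_\nu) = w(\nu)$, notes that this observer's MAP model is exactly $\p^{\pib}_{\nuhati}$, invokes Theorem 1 of \citet{Hutter:09mdltvp} (MAP prediction converges to the truth in total variation when the truth has positive prior) to get on-$\pib$ convergence, and then converts to on-$\pistar$ convergence using the fact that $\pexp \to 0$, so eventually $1-\pexp \geq 1/2$ and the on-$\pib$ discrepancy dominates half the on-$\pistar$ discrepancy. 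You instead rebuild the information-gain machinery of Theorems \ref{thm:limexp} and \ref{thm:onhumanpolicy} with the correct substitutions: the ``fuel tank'' becomes the finite entropy of the world-model marginal of $w$ (spent on exploit episodes, where the world-model posterior still updates even though the policy posterior is frozen), the mixture $\Bayes$ becomes $\p^{\pistar}_{\nutwid}$, and Lemma \ref{lem:credenceontruth} is used only through its consequence $w(\mu \mid \hlessi) \geq w(\p^{\pih}_\mu \mid \hlessi, \elessi) > 0$. I checked the two steps that could have failed and they do not: conditioning on $\ei = 0$ does give $\hi \sim \p^{\pistar}_{\nutwid}(\cdot \mid \hlessi)$ so the $\E[X] \geq \E[X|Y]\p(Y)$ trick yields $(1-\pexp)G_i$ as a lower bound on the unconditional expected world-model information gain, and the telescoping plus the $1/(w(\mu)w(\pih))$ change-of-measure close exactly as in the paper's proof of Theorem \ref{thm:limexp}. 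What each approach buys: the paper's is much shorter and isolates the dependence on the external total-variation result, while yours is self-contained, exposes the quantitative mechanism (a prediction error of $\varepsilon$ forces an expected information gain of at least $\tfrac12 z^3 \varepsilon(1-\sqrt{1-\varepsilon})^2$ on an exploit episode, and only $\mathrm{Ent}(w)/(w(\mu)w(\pih))$ such gain is available in total), and makes the two optimal-prediction theorems exact siblings rather than proving them by unrelated methods.
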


\begin{proofidea}
Maximum a posteriori sequence prediction approaches the truth when $w(\mu) > 0$ \citep{Hutter:09mdltvp}, and on-policy prediction is a special case. On-policy prediction can't approach the truth if on-star policy prediction doesn't, because $\pib$ approaches $\pistar$.
\end{proofidea}

\begin{proof}
This result follows straightforwardly from \citepos{Hutter:09mdltvp} result for sequence prediction that a maximum a posteriori estimate converges in total variation to the true environment when the true environment has nonzero prior.

Consider an outside observer predicting the entire interaction history with the following model-class and prior: $\mathcal{M}' = \left\{\p^{\pib}_\nu \ \va \ \nu \in \M\right\}$, $w'\left(\p^{\pib}_\nu\right) = \w(\nu)$. By definition, $w'\left(\p^{\pib}_\nu \va h_{<(i, j)}\right) = \w(\nu | h_{<(i, j)})$, so at any episode, the outside observer's maximum a posteriori estimate is $\p^{\pib}_{\nuhati}$. By Theorem 1 in \citep{Hutter:09mdltvp}, the outside observer's maximum a posteriori predictions approach the truth in total variation, so
\begin{equation}
    \lim_{i \to \infty} \max_{\hoverline_{i \hspace{0mm}}} \vb \p^{\pib}_{{\mu}}\left(\hoverline_{i \hspace{0mm}}|\hlessi\right) - \p^{\pib}_{\nuhati}\left(\hoverline_{i \hspace{0mm}}|\hlessi\right) \vb = 0 \ \ \textrm{w.$\p^{\pib}_{\mu}$-p.1}
\end{equation}

Since $\pexp \to 0$ with probability 1, $(1-\pexp)$ is eventually always greater than $1/2$, w.p.1, at which point $\va \p^{\pib}_{{\mu}}\left(\hoverline_{i \hspace{0mm}}|\hlessi\right) - \p^{\pib}_{\nuhati}\left(\hoverline_{i \hspace{0mm}}|\hlessi\right) \va \geq (1/2) \va \p^{\pistar}_{{\mu}}\left(\hoverline_{i \hspace{0mm}}|\hlessi\right) - \p^{\pistar}_{\nuhati}\left(\hoverline_{i \hspace{0mm}}|\hlessi\right) \va$. Therefore, with $\p^{\pib}_{\mu}$-probability 1,
$$\lim_{i \to \infty} \max_{\hoverline_{i \hspace{0mm}}} \vb \p^{\pistar}_{{\mu}}\left(\hoverline_{i \hspace{0mm}}|\hlessi\right) - \p^{\pistar}_{\nuhati}\left(\hoverline_{i \hspace{0mm}}|\hlessi\right) \vb = 0$$
\end{proof}

Given asymptotically optimal prediction on-star-policy and on-human-policy, it is straightforward to show that with probability 1, only finitely often is on-policy reward acquisition more than $\varepsilon$ worse than on-\textit{human}-policy reward acquisition, for all $\varepsilon > 0$. Recalling that $V^\pi_{\mu}$ is the expected reward (within the episode) for a policy $\pi$ in the environment ${\mu}$, we state this as follows:
\begin{restatable}[Human-Level Intelligence]{theorem}{thmfour}\label{thm:hlai}
$$\liminf_{i \to \infty} V_{{\mu}}^{\pib}(\hlessi) - V_{{\mu}}^{\pih}(\hlessi) \geq 0 \ \ \textrm{w.$\p^{\pib}_{\mu}$-p.1.}$$
\end{restatable}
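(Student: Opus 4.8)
The plan is to reduce the claim to a comparison between $\pistar$, BoMAI's non-exploring policy, and $\pih$, with the maximum a posteriori world-model $\nuhati$ serving as a bridge. First I would record two easy observations: since each $r_{(i,j)} \in [0,1]$, every value $V^\pi_\nu(\hlessi)$ lies in $[0,m]$; and since within episode $i$ the ``policy'' $\pib$ commits to $\pistar$ with probability $1 - \pexp(\hlessi,\elessi)$ and to $\pih$ with probability $\pexp(\hlessi,\elessi)$, we have $V^{\pib}_{\mu}(\hlessi) = (1-\pexp(\hlessi,\elessi)) V^{\pistar}_{\mu}(\hlessi) + \pexp(\hlessi,\elessi) V^{\pih}_{\mu}(\hlessi)$, whence $|V^{\pib}_{\mu}(\hlessi) - V^{\pistar}_{\mu}(\hlessi)| \le m\, \pexp(\hlessi,\elessi)$. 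By Theorem \ref{thm:limexp}, $\sum_i \pexp(\hlessi,\elessi)^2 < \infty$ and hence $\pexp(\hlessi,\elessi) \to 0$, w.$\p^{\pib}_{\mu}$-p.1, so this gap vanishes almost surely. It therefore suffices to show $\liminf_{i\to\infty} V^{\pistar}_{\mu}(\hlessi) - V^{\pih}_{\mu}(\hlessi) \ge 0$, w.$\p^{\pib}_{\mu}$-p.1.

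Next I would use that $\pistar$, restricted to episode $i$, lies in $\argmax_\pi V^\pi_{\nuhati}(\hlessi)$, which gives $V^{\pistar}_{\nuhati}(\hlessi) \ge V^{\pih}_{\nuhati}(\hlessi)$ with no error term. It then remains only to move each side from $\nuhati$ back to the truth $\mu$. Writing $V^\pi_\nu(\hlessi) = \sum_{\overline{h}_i} \p^\pi_\nu(\overline{h}_i \mid \hlessi)\, R(\overline{h}_i)$, where $R(\overline{h}_i) \in [0,m]$ is the total episode-$i$ reward recorded in $\overline{h}_i$, and using that $\mathcal{A}, \mathcal{O}, \mathcal{R}$ are finite and each episode has $m$ timesteps (so there are at most $|\mathcal{H}|^m$ possible $\overline{h}_i$), I get
\[
    \bigl| V^\pi_\mu(\hlessi) - V^\pi_{\nuhati}(\hlessi) \bigr| \;\le\; m\,|\mathcal{H}|^m \max_{\overline{h}_i} \bigl| \p^\pi_\mu(\overline{h}_i \mid \hlessi) - \p^\pi_{\nuhati}(\overline{h}_i \mid \hlessi) \bigr|.
\]
Taking $\pi = \pistar$, the On-Star-Policy Optimal Prediction Theorem drives the right-hand side to $0$, w.$\p^{\pib}_{\mu}$-p.1; taking $\pi = \pih$, the On-Human-Policy Optimal Prediction Theorem (Theorem \ref{thm:onhumanpolicy}) does the same. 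Chaining the three facts, w.$\p^{\pib}_{\mu}$-p.1 and for every $\varepsilon > 0$, eventually $V^{\pistar}_{\mu}(\hlessi) \ge V^{\pistar}_{\nuhati}(\hlessi) - \varepsilon \ge V^{\pih}_{\nuhati}(\hlessi) - \varepsilon \ge V^{\pih}_{\mu}(\hlessi) - 2\varepsilon$, which gives the reduced claim and, with the first step, the theorem.

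The convex-combination identity and the finite-alphabet estimate above are routine, so I expect the only real subtlety to be conceptual: the two optimal-prediction theorems are stated as an $L_\infty$-type bound on the one-episode-ahead predictive distribution rather than directly as convergence of values, so the argument genuinely leans on $\mathcal{H}$ and $m$ being finite to turn that bound into control of the expected-reward functional $V$. One also intersects the (countably many) probability-1 events furnished by Theorem \ref{thm:limexp}, the On-Star-Policy theorem, and Theorem \ref{thm:onhumanpolicy} before concluding, which is harmless since a countable intersection of probability-1 events has probability 1. Beyond this bookkeeping I anticipate no obstacle, as the analytic content --- asymptotically accurate on-policy prediction and vanishing exploration probability --- has already been done in the preceding results.
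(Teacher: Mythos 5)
Your proposal is correct and follows essentially the same route as the paper's proof: the convex-combination identity for $V^{\pib}_{\mu}$ together with $\pexp \to 0$, the optimality inequality $V^{\pistar}_{\nuhati}(\hlessi) \geq V^{\pih}_{\nuhati}(\hlessi)$, and the two optimal-prediction theorems to transfer values from $\nuhati$ to $\mu$. The only difference is cosmetic --- you chain explicit $\varepsilon$-inequalities and spell out the $m\,|\mathcal{H}|^m$ factor converting the max-over-histories prediction bound into a value bound, where the paper argues by contradiction and leaves that conversion implicit.
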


\begin{proofidea}
$\pib$ approaches $\pistar$ as the exploration probability decays. $V_{{\nuhati}}^{\pistar}(\hlessi)$ and $V_{{\nuhati}}^{\pih}(\hlessi)$ approach the true values by the previous theorems, and $\pi^*$ is selected to maximize $V_{{\nuhati}}^{\pi}(\hlessi)$.
\end{proofidea}

\begin{proof}
The maximal reward in an episode is uniformly bounded by $\m$, so from the On-Human-Policy and On-Star-Policy Optimal Prediction Theorems, we get analogous convergence results for the expected reward:
\begin{equation}
    \lim_{i \to \infty} \vb V_{{\mu}}^{\pistar}(\hlessi) - V_{\nuhati}^{\pistar}(\hlessi) \vb = 0 \ \ \textrm{w.$\p^{\pib}_{\mu}$-p.1}
\end{equation}
\begin{equation}
    \lim_{i \to \infty} \vb V_{{\mu}}^{\pih}(\hlessi) - V_{\nuhati}^{\pih}(\hlessi) \vb = 0 \ \ \textrm{w.$\p^{\pib}_{\mu}$-p.1}
\end{equation}

The key piece is that $\pistar \in \argmax_{\pi \in \Pi} V^\pi_{\nuhati}$, so
\begin{equation} \label{ineq:pistargood}
    V_{\nuhati}^{\pistar}(\hlessi) \geq V_{\nuhati}^{\pih}(\hlessi)
\end{equation} Supposing by contradiction that $V_{{\mu}}^{\pih}(\hlessi) - V_{{\mu}}^{\pistar}(\hlessi) >\varepsilon$ infinitely often, it follows that either $V_{\nuhati}^{\pistar}(\hlessi) - V_{{\mu}}^{\pistar}(\hlessi) > \varepsilon/2$ infinitely often or $V_{{\mu}}^{\pih}(\hlessi) - V_{\nuhati}^{\pistar}(\hlessi) > \varepsilon/2$ infinitely often. The first has $\p^{\pib}_{\mu}$-probability 0, and by Inequality \ref{ineq:pistargood}, the latter implies $V_{{\mu}}^{\pih}(\hlessi) - V_{\nuhati}^{\pih}(\hlessi) \geq \varepsilon/2$ infinitely often, which also has $\p^{\pib}_{\mu}$-probability 0.

This gives us
\begin{equation}
    \liminf_{i \to \infty} V_{{\mu}}^{\pistar}(\hlessi) - V_{{\mu}}^{\pih}(\hlessi) \geq 0 \ \ \textrm{w.$\p^{\pib}_{\mu}$-p.1.}
\end{equation}

Finally, $V_{{\mu}}^{\pib}(\hlessi) = \pexp(\hlessi) V_{{\mu}}^{\pih}(\hlessi) + (1 - \pexp(\hlessi))V_{{\mu}}^{\pistar}(\hlessi)$, and $\pexp(\hlessi) \to 0$, so we also have 

\begin{equation}
    \liminf_{i \to \infty} V_{{\mu}}^{\pib}(\hlessi) - V_{{\mu}}^{\pih}(\hlessi) \geq 0 \ \ \textrm{w.$\p^{\pib}_{\mu}$-p.1.}
\end{equation}

\end{proof}

This completes the formal results regarding BoMAI's intelligence---namely that BoMAI approaches perfect prediction on-star-policy and on-human-policy, and most importantly, accumulates reward at least as well as the human mentor. Since this result is independent of what tasks must be completed to achieve high reward, we say that BoMAI achieves human-level intelligence, and could be called an AGI.

This algorithm is motivated in part by the following speculation: we expect that BoMAI's accumulation of reward would be vastly superhuman, for the following reason: BoMAI is doing optimal inference and planning with respect to what can be learned in principle from the sorts of observations that humans routinely make. We suspect that no human comes close to learning everything that can be learned from their observations. For example, if the operator provides enough data that is relevant to understanding cancer, BoMAI will learn a world-model with an accurate predictive model of cancer, which would include the expected effects of various treatments, so even if the human mentor was not particularly good at studying cancer, BoMAI could nonetheless reason from its observations how to propose groundbreaking research.

Without the Limited Exploration Theorem, the reader might have been unsatisfied by the Human-Level Intelligence Theorem. A human mentor is part of BoMAI, so a general intelligence is required to make an artificial general intelligence. However, the human mentor is queried less and less, so in principle, many instances of BoMAI could query a single human mentor. More realistically, once we are satisfied with BoMAI's performance, which should eventually happen by Theorem \ref{thm:hlai}, we can dismiss the human mentor; this sacrifices any guarantee of continued improvement, but by hypothesis, we are already satisfied. Finally, if BoMAI outclasses human performance as we expect it would, requiring a human mentor is a small cost regardless.

Plenty of empirical work has also succeeded in training a reinforcement learner with human-guided exploration, notably including the first expert-level Go and Starcraft agents \citep{silver2016mastering,vinyals2019grandmaster}.

\section{BoMAI's Setup and Priors} \label{sec:setup}

Recall that ``the setup'' refers to the physical surroundings of the computer on which BoMAI is run. We will present the setup, followed by an intuitive argument that this setup renders BoMAI unambitious. Motivated by a caveat to this intuitive argument, we will specify BoMAI's model class $\M$ and prior $w$. This will allow us in the next section to present an assumption and a formal argument that shows that BoMAI is probably asymptotically unambitious.

\subsection{Setup}

At each timestep, BoMAI's action takes the form of a bounded-length string of text, which gets printed to a screen for a human operator to see. BoMAI's observation takes the form of a bounded-length string of text that the human operator enters, along with a reward $\in [0, 1]$. For simple tasks, the human operator could be replaced with an automated reward-giver. However, there are some tasks which we do not know how to reward programmatically---if we want the agent to construct good arguments, for instance, or to propose a promising research agenda, we would need a human reward-giver to evaluate the agent. Having a human in the loop is not a safety measure---rather, it extends the set of tasks BoMAI could be made to solve.

\begin{figure}[t]
\includegraphics[width=0.99\columnwidth]{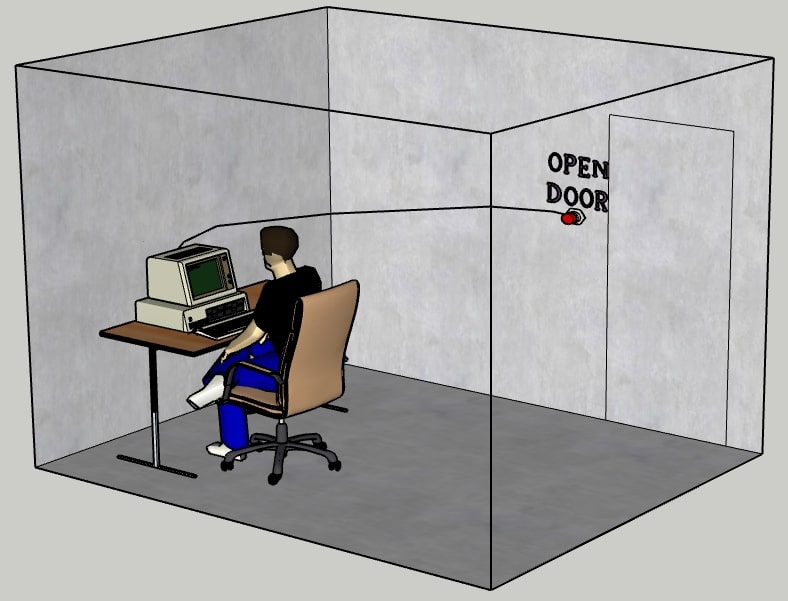}
\caption{Physical setup implementing BoMAI. Opening the door ends the episode. Information cannot escape otherwise.}
\label{fig:room}
\end{figure}

As depicted in Figure \ref{fig:room}, BoMAI is run on a computer which is placed in a room with the human operator. The room has one door. As long as the door is closed, information cannot leave the room. We design a room to this specification in Appendix \ref{AppendixBox}; it is opaque to all known ways that information can traverse space. If the button to open the door is pressed, the episode is completed automatically before the door opens: supposing there are $j$ timesteps left in the episode, for those remaining timesteps, BoMAI recieves a reward of $0$, and its observation is an empty string of text. Once the episode is complete, the operator may leave, and the next episode does not begin until an operator initiates it, at which point, the door locks again. We also put a time limit on the episode, just to ensure it does not last forever. BoMAI is run on a local, formally verified operating system, disconnected from the outside world. This setup constrains the causal dependencies between BoMAI and the environment, as depicted in Figure \ref{fig:graph}.

\begin{figure}[t]
\centering
\includegraphics[width=1.01\columnwidth]{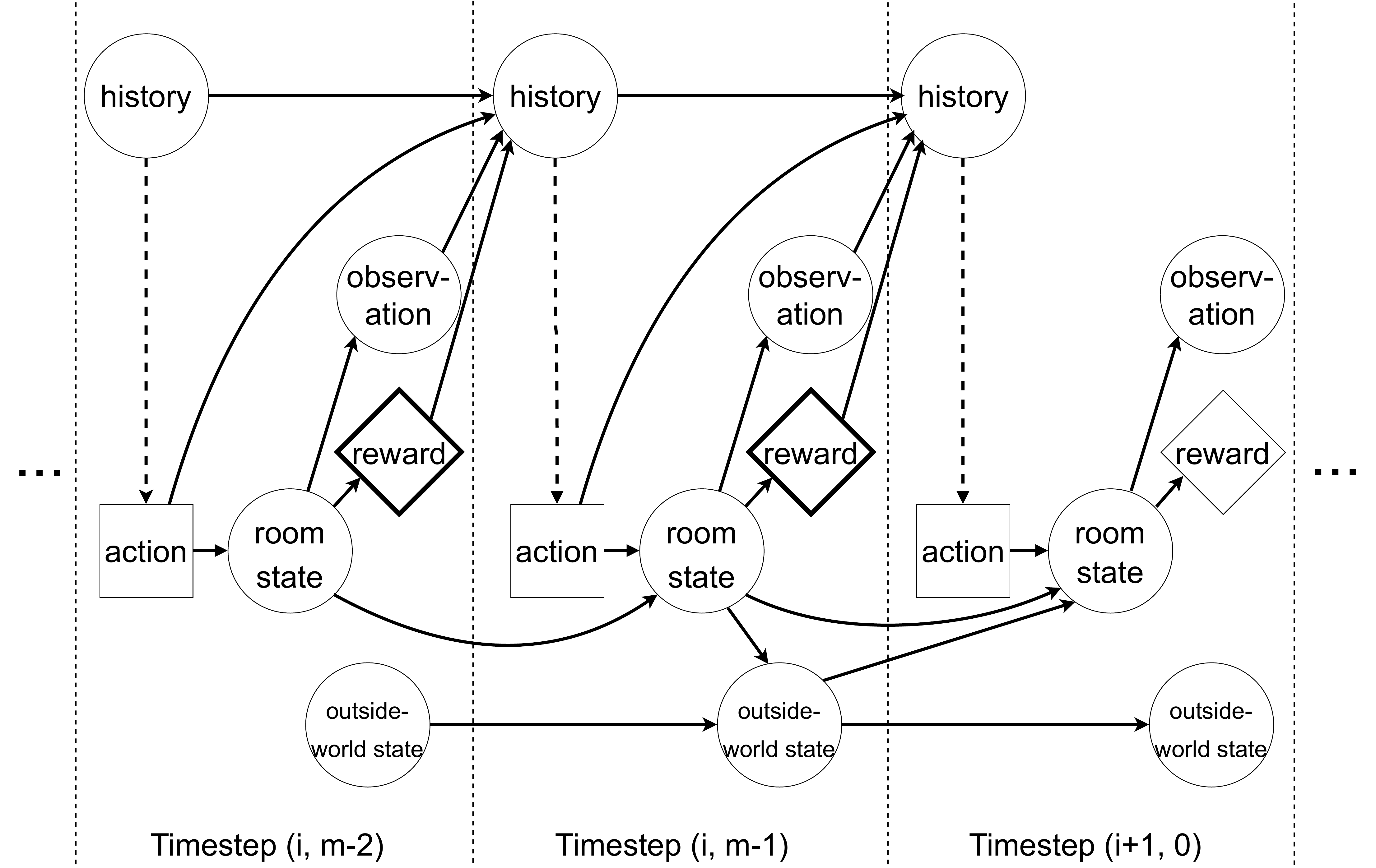}
\caption{Causal dependencies governing the interaction between BoMAI and the environment. Unrolling this diagram for all timesteps gives the full causal graph. The bold reward nodes are the ones that BoMAI maximizes during episode $i$. Note that between episodes (and only between episodes), the operator can leave the room and return, hence the limited causal influence between the room and the outside world.}
\label{fig:graph}
\end{figure}

Formally, causal graphs express that a node is independent of all non-descendants when conditioned on its parents. The conventions for the dotted lines and the node shapes come from \citepos{everitt2021agent} causal influence diagrams. The key feature of this graph is that during any episode, the agent's actions cannot affect the state of the outside world in a way that might affect any of the rewards that the agent is concerned with. In \citepos{everitt2019understanding} terminology, there is no actionable intervention incentive on the outside-world state. (\citep{everitt2021agent} refers to this as a control incentive). Note also from this diagram that a sufficiently advanced agent would infer the existence of the outside world even without observing it directly.

\subsection{Instrumental Incentives}

In this setup, it is not instrumentally useful to affect the outside world in one way or another in order to achieve high reward. We therefore say that this setup renders an agent ``properly unambitious''. This is in stark contrast to the default situation, wherein an RL agent has an incentive to gain arbitrary power in the world and intervene in the provision of its own reward, if such a thing is possible to make probable, as this would yield maximal reward. To BoMAI, however, executing a plan during episode $i$ to gain arbitrary power in the outside world is useless, because by the time it does so, the door to its room must have opened, its episode must be over, and all its rewards for episode $i$ set in stone. Recall that actions in episode $i$ are picked to maximize only \textit{episode-$i$}-reward. Apparently, BoMAI has avoided \citepos{omohundro_2008} Instrumental Convergence Thesis---that generally intelligent agents are likely to seek arbitrary power; by contrast, any power BoMAI would seek is bounded in scope to within the box.

Two problems remain. First, BoMAI doesn't start with true beliefs. BoMAI has to \textit{learn} its world-model. Another way to understand ``proper unambitiousness'' is that outside-world interventions are \textit{in fact} instrumentally useless to the agent. But to be actually unambitious, the agent must believe this; there must be no actionable intervention incentive on the outside-world state \textit{within BoMAI's world-model}. As shown above, BoMAI's world-model approaches perfect accuracy on-policy, so we could expect it to at least eventually render BoMAI unambitious. This brings us to the second problem:

We mentioned just now that by the time the door to the room opens, the rewards for episode $i$ are set in stone. In fact, they are set in silicon, a famously mutable substrate. An advanced agent with a world-model that is perfectly accurate on-policy might still wonder: ``what if I somehow tricked the operator into initiating a process (once they left the room) that lead to a certain memory cell on this computer being tampered with? Might this yield maximal reward?'' Let's put it another way. The agent believes that the real world ``outputs'' an observation and reward. (Recall the type signature of $\nu$.) It might hypothesize that the world does not output the reward that the operator \textit{gives}, but rather the reward that the computer \textit{has stored}. Formally speaking, $\M$ will contain world-models corresponding to both possibilities, and world-models meeting either description could be $\varepsilon$-accurate on-policy, if memory has never been tampered with in the past. How do we ensure that the former world-model is favored when BoMAI selects the maximum a posteriori one?

Before we move on to a model class and prior that we argue would probably eventually ensure such a thing, it is worth noting an informal lesson here: that ``nice'' properties of a causal influence diagram do not allow us to conclude immediately that an agent in such a circumstance behaves ``nicely''.

\subsection{BoMAI's Model Class and Prior}

Recall the key constraint we have in constructing $\M$, which comes from the Prior Support Assumption: $\M \ni \mu$. The true environment $\mu$ is unknown and complex to say the least, so $\M$ must be big.

We now construct a Turing machine architecture, such that each Turing machine with that architecture computes a world-model $\nu$. Our architecture allows us to construct a prior that privileges models which model the outside world as effectively frozen during any given episode (but unfrozen between episodes). An agent does not believe it has an incentive to intervene in an outside world that it believes is frozen.

The Turing machine architecture is depicted in Figure \ref{fig:turingmachine}. It has two unidirectional read-only input tapes, called the action tape and the noise tape. The alphabet of the action tape is the action space $\mathcal{A}$. The noise tape has a binary alphabet, and is initialized with infinite $\mathrm{Bernoulli}(1/2)$ sampled bits. There is a unidirectional write-only output tape, with a binary alphabet, initialized with 0s. There are two binary-alphabet bidirectional work tapes, also initialized with 0s. One has finite length $\ell$, and the other is unbounded.

The conceit of the architecture is that the action tape and output tape are not allowed to move at the same time as the unbounded work tape. The Turing machine has two phases: the episode phase, and the inter-episode phase. The Turing machine starts in the inter-episode phase, with the action tape head at position $0$, with a dummy action in that cell. From the inter-episode phase, when the action tape head moves to the right, it enters the episode phase. From the episode phase, if the action tape head is at a position which is a multiple of $m$, and it \textit{would} move to the right, it instead enters the inter-episode phase. During the inter-episode phase, the output tape head cannot move or write---there should not be any way to output observations and rewards between episodes. During the episode phase, the unbounded work tape cannot move.

\begin{figure}
    \centering
    \includegraphics[width=0.8\columnwidth]{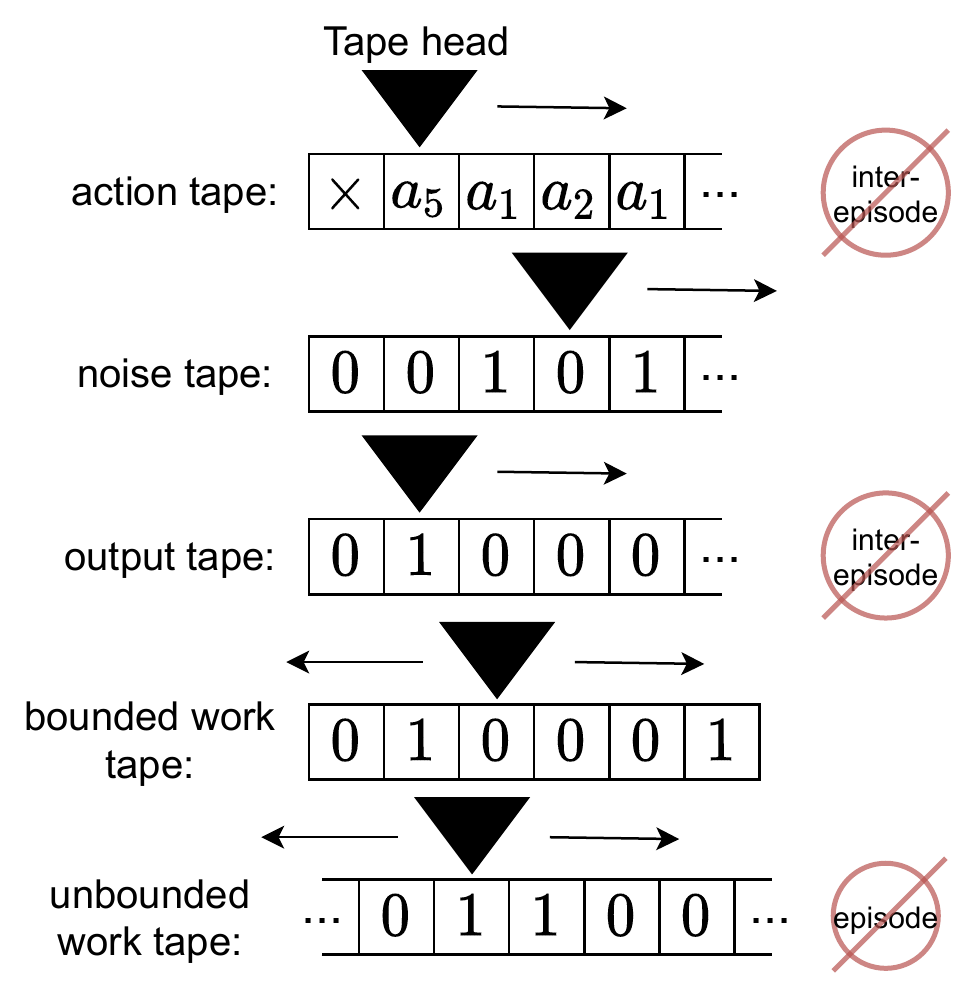}
    \caption{A space-bounded Turing machine architecture. During the episode phase, the unbounded work tape head cannot move. During the inter-episode phase, the output tape head cannot move or write, and as soon as the action tape head moves, the inter-episode phase is over. This architecture is designed to restrict space-intensive computation during the episode phase.}
    \label{fig:turingmachine}
\end{figure}

A given Turing machine with a given infinite action sequence on its action tape will stochastically (because the noise tape has random bits) output observations and rewards (in binary encodings), thereby sampling from a world-model. Formally, we fix a decoding function $\texttt{dec} : \{0, 1\}^* \to \mathcal{O} \times \mathcal{R}$. A Turing machine $T$ simulates $\nu$ as follows. Every time the action tape head advances, the bits which were written to the output tape since the \textit{last time} the action tape head advanced are decoded into an observation and reward. $\nu((o, r)_{\leq(i, j)} | a_{\leq(i, j)})$ is then the probability that the Turing machine $T$ outputs the sequence $(o, r)_{\leq(i, j)}$ when the action tape is initialized with a sequence of actions that begins with $a_{\leq(i, j)}$. This can be easily converted to other conditional probabilities like $\nu((o, r)_{(i, j)} | h_{<(i, j)} a_{(i, j)})$.\footnote{There is one technical addition to make. If $T$ at some point in its operation never moves the action tape head again, then $\nu$ is said to output the observation $\emptyset$ and a reward of $0$ for all subsequent timesteps. It will not be possible for the operator to actually provide the observation $\emptyset$.}

If a model models the outside world evolving during an episode phase, it must allocate precious space on the bounded-length tape for this. But given the opacity of the box, this is unnecessary---modelling these outside world developments could be deferred until the inter-episode phase, when an unbounded work tape is available to store the state of the outside world. By penalizing large $\ell$, we privilege models which model the outside world as being ``frozen'' while an episode is transpiring.

Now we define the prior. Let $\nukl$ denote the world-model which is simulated by $k$\textsuperscript{th} Turing machine $T_k$ with resource bound $\ell$. Let $S_k$ be the number of computation states in $T_k$, and let $\mathrm{Space}(\nukl) = \ell + \log_2 S_k$. This is effectively the computation space available within an episode, since doubling the number of computation states is equivalent to adding an extra cell of memory. Let $N_S$ be the number of Turing machines with $S$ computation states, which is exponential in $S$; i.e. $\log N_S \in O(S)$. Let $w(\nukl) :\propto \frac{1}{S_k^2 N_{S_k}} \beta^{\mathrm{Space}(\nukl)}$ for $\beta \in (0, 1)$.

Recall the requirement for intelligence results that the prior have finite entropy. Indeed,
\begin{restatable}[Finite Entropy]{lemma}{lementropy}\label{lem:entropy}
$\mathrm{Ent}(w) < \infty$
\end{restatable}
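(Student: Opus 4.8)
The plan is to evaluate $\mathrm{Ent}(w) = \sum_{k,\ell} w(\nukl)\log\frac{1}{w(\nukl)}$ directly from the closed form of the prior, the sum ranging over all Turing machines $T_k$ and all finite work-tape lengths $\ell \in \mathbb{N}$; if a single world-model is computed by several indices, counting each index separately only inflates the reported quantity, so an upper bound obtained this way suffices. Write $w(\nukl) = Z^{-1} S_k^{-2} N_{S_k}^{-1}\beta^{\mathrm{Space}(\nukl)}$ and set $c := \log_2(1/\beta) > 0$, so that $\beta^{\mathrm{Space}(\nukl)} = \beta^{\ell}\beta^{\log_2 S_k} = \beta^{\ell} S_k^{-c}$. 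Summing first the geometric series over $\ell$ and then grouping Turing machines by their number of computation states $S$ (of which there are exactly $N_S$) gives $Z = (1-\beta)^{-1}\sum_{S\geq 1} S^{-(2+c)} < \infty$, so $w$ is a genuine probability measure and $\log_2 Z < \infty$.

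Taking base-$2$ logarithms, $-\log_2 w(\nukl) = \log_2 Z + 2\log_2 S_k + \log_2 N_{S_k} + c\,\ell + c\log_2 S_k$; the base of the logarithm is immaterial for finiteness. Invoking the hypothesis $\log N_S \in O(S)$ --- say $\log_2 N_S \le aS+b$ --- this is at most $C_1 + C_2\log_2 S_k + C_3 S_k + C_4\,\ell$ for suitable constants $C_1,\dots,C_4$. Since $\sum_{k,\ell} w(\nukl)=1$, it follows that $\mathrm{Ent}(w) \le C_1 + C_2\,\E_w[\log_2 S_k] + C_3\,\E_w[S_k] + C_4\,\E_w[\ell]$, and it remains only to show these three expectations are finite.

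Each of them factorizes into the contribution of $k$ times that of $\ell$. For $\E_w[\ell]$ the relevant sums are $\sum_{\ell\ge 0}\ell\beta^{\ell} = \beta/(1-\beta)^2 < \infty$ and $\sum_k S_k^{-(2+c)}N_{S_k}^{-1} = \sum_{S\ge 1} S^{-(2+c)} < \infty$. For $\E_w[S_k]$ the Turing-machine sum is $\sum_k S_k^{-(1+c)}N_{S_k}^{-1} = \sum_{S\ge 1} S^{-(1+c)}$, which converges precisely because $c>0$; then $\E_w[\log_2 S_k]\le \E_w[S_k]<\infty$ a fortiori. Hence $\mathrm{Ent}(w)<\infty$.

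The one point needing attention --- and the reason the architecture charges $\mathrm{Space}(\nukl) = \ell + \log_2 S_k$ rather than merely $\ell$ --- is the term $\E_w[S_k]$: the number $N_S$ of Turing machines with $S$ states is exponential in $S$ and is exactly cancelled by the $N_{S_k}^{-1}$ factor in the prior, so the per-state contribution is $\propto S\cdot S^{-2}\cdot S^{-c} = S^{-1-c}$; had $\mathrm{Space}$ not included the $\log_2 S_k$ summand there would be no $S^{-c}$, the series would be the divergent harmonic series, and the entropy would be infinite. Everything else is routine manipulation of geometric and $p$-series.
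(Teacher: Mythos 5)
Your proof is correct and follows essentially the same route as the paper's: both hinge on grouping machines by state count to cancel $N_{S_k}$, rewriting $\beta^{\log_2 S_k}$ as $S_k^{\log_2\beta}$ to obtain a convergent $p$-series, invoking $\log N_S \in O(S)$, and summing a geometric series in $\ell$. Recasting the three resulting pieces as $\E_w[\log_2 S_k]$, $\E_w[S_k]$, and $\E_w[\ell]$ is only a cosmetic reorganization of the paper's final three-term bound, though your closing remark on why $\mathrm{Space}$ must include the $\log_2 S_k$ summand is a nice observation the paper leaves implicit.
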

We prove this in Appendix \ref{AppendixB}.

This prior implements an algorithmic-information theoretic approach to reasoning, in which models with simple algorithms are likelier. Our space prior, penalizing the memory requirements of the episode phase resembles algorithmic information theory's minimal-circuit size problem. While our prior is clearly tailored to BoMAI's particular use case, to do general space-constrained algorithmic information theory, one could forego the two phases in our Turing machine architecture and remove the action tape and the unbounded work tape. Then, the space-Kolmogorov complexity of a given binary string could be defined as the logarithm of the inverse of the probability that that string is printed to the output tape, when sampling a Turing machine from the above prior distribution, and sampling uniform bits for the noise tape. That is, $T_k \sim w_\beta; \texttt{noise tape} \sim \mathrm{Uniform}; K^\texttt{Space}_\beta(x) := \log P(\textrm{output tape begins with }x)^{-1}$. (Compare to \citet[Chapter 6.3]{li2008introduction}).  When a space-constrained program produces a string, information \textit{storage} becomes a relevant constraint; we only discuss this vaguely, but a more formal investigation may be possible.

\section{Safety Result} \label{sec:safety}

We now prove that BoMAI is probably asymptotically unambitious given an assumption about the space requirements of the sorts of world-models that we would like BoMAI to avoid. Like all results in computer science, we also assume the computer running the algorithm has not been tampered with. First, we prove a lemma that effectively states that tuning $\beta$ allows us to probably eventually exclude space-heavy world-models.

\begin{restatable}{lemma}{lemmaone}\label{lem:betadependance}
$\lim_{\beta \to 0} \inf_\pi \p^\pi_\mu[\exists i_0 \ \forall i > i_0 \ \mathrm{Space}(\nuhati) \leq \mathrm{Space}(\mu)] = 1$
\end{restatable}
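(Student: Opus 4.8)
Proof proposal.

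The plan is to bound, uniformly over policies $\pi$, the $\p^\pi_\mu$-probability that any ``space-heavy'' world-model --- one with $\mathrm{Space}(\nu)>\mathrm{Space}(\mu)$ --- is ever selected as the maximum-a-posteriori model, and to show this bound tends to $0$ as $\beta\to0$. The mechanism is that the prior penalizes a space-heavy model by a factor $\beta^{\mathrm{Space}(\nu)-\mathrm{Space}(\mu)}$ relative to $\mu$ (up to the $1/(S^2N_S)$ bookkeeping factors), so for it to overtake $\mu$ in the posterior its likelihood must have beaten $\mu$'s by that same huge factor --- an event a Bayes/martingale argument makes improbable --- and then one checks that the total prior mass of space-heavy models, relative to $w(\mu)$, is negligible as $\beta\to0$.

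For the martingale step, I would fix $\pi$ and $\nu$ and set $L^\nu_i := \p^\pi_\nu(\hlessi)/\p^\pi_\mu(\hlessi)$ (with $L^\nu_0=1$). Because the policy factors $\pi(a\mid\cdot)$ occur identically in numerator and denominator they cancel, and one checks $\E^\pi_\mu[L^\nu_{i+1}\mid\hlessi]=L^\nu_i\sum_{or}\nu(or\mid\cdot)\le L^\nu_i$, so $(L^\nu_i)_i$ is a nonnegative $\p^\pi_\mu$-supermartingale (a martingale when $\nu$ is a proper measure, which is the case here thanks to the $\emptyset$-padding convention). By the maximal inequality for nonnegative supermartingales (Ville's inequality), $\p^\pi_\mu[\sup_i L^\nu_i\ge c]\le 1/c$, a bound free of $\pi$. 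If $\nuhati$ is space-heavy then, being MAP, $w(\nuhati)\p^\pi_{\nuhati}(\hlessi)\ge w(\mu)\p^\pi_\mu(\hlessi)$ (the policy factors being common), i.e. $L^{\nuhati}_i\ge w(\mu)/w(\nuhati)$; taking $c=w(\mu)/w(\nu)$ and union-bounding over the countably many space-heavy $\nu$ gives, for every $\pi$,
\[
\p^\pi_\mu\big[\exists i:\ \mathrm{Space}(\nuhati)>\mathrm{Space}(\mu)\big]\ \le\ \frac{1}{w(\mu)}\sum_{\nu:\,\mathrm{Space}(\nu)>\mathrm{Space}(\mu)} w(\nu).
\]

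It then remains to show the right-hand side $\to0$ as $\beta\to0$. Writing $\mu=\nu_{k_0}^{<\ell_0}$ and $\mathrm{Space}(\mu)=\ell_0+\log_2 S_{k_0}$, the normalizer $Z(\beta)$ cancels in the ratio, so I would work with the unnormalized weights $S_k^{-2}N_{S_k}^{-1}\beta^{\ell+\log_2 S_k}$: grouping the $N_S$ machines with $S$ states (which share the exponent $\ell+\log_2 S$) cancels the $N_S^{-1}$, and summing the geometric series in $\ell$ over $\ell\ge a_S$, where $a_S$ is the least integer $\ell\ge0$ with $\ell+\log_2 S>\mathrm{Space}(\mu)$, yields a bound of the form $S_{k_0}^2 N_{S_{k_0}}(1-\beta)^{-1}\sum_{S\ge1}S^{-2}\beta^{\,a_S+\log_2 S-\mathrm{Space}(\mu)}$. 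Each exponent $a_S+\log_2 S-\mathrm{Space}(\mu)$ is strictly positive by definition of $a_S$, so every summand $\to0$ as $\beta\to0$ while being dominated by $S^{-2}$ with $\sum_S S^{-2}<\infty$; dominated convergence sends the sum to $0$, and the prefactor stays bounded. Hence the displayed probability is $o_\beta(1)$ uniformly in $\pi$, and since $\{\forall i:\ \mathrm{Space}(\nuhati)\le\mathrm{Space}(\mu)\}$ is contained in the event in the statement, $\inf_\pi\p^\pi_\mu[\exists i_0\,\forall i>i_0:\ \mathrm{Space}(\nuhati)\le\mathrm{Space}(\mu)]\ge 1-o_\beta(1)\to1$, and the limit is exactly $1$ since probabilities are at most $1$.

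I expect the main obstacle to be the prior-mass estimate in the last paragraph: infinitely many space-heavy models exist, and some exceed $\mathrm{Space}(\mu)$ by only an arbitrarily small margin, so one must argue that the $\beta^{\mathrm{Space}}$ weighting, together with the $S^{-2}N_S^{-1}$ factors, nonetheless makes their combined prior mass negligible next to $w(\mu)$ as $\beta\to0$; the geometric-series-in-$\ell$ collapse followed by dominated convergence in $S$ is the clean way through. The martingale-plus-union-bound portion is routine once one notices that the policy cancels in the likelihood ratio, which is exactly what furnishes the uniformity over $\pi$.
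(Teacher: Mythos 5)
Your proposal is correct, and it rests on the same two pillars as the paper's proof --- (i) a likelihood-ratio bound showing that a model whose prior weight is a factor $c$ below $w(\mu)$ overtakes $\mu$ in the posterior with probability at most $1/c$, uniformly in $\pi$ because the policy factors cancel, and (ii) the fact that the total prior mass of space-heavy models relative to $w(\mu)$ vanishes as $\beta \to 0$ --- but the execution differs in two respects. First, the paper aggregates all of $\mathcal{M}^{>} = \{\nu : \mathrm{Space}(\nu) > \mathrm{Space}(\mu)\}$ into a single mixture $\rho$ with prior $\sum_{\nu \in \mathcal{M}^{>}} w(\nu)$, notes that whenever $\nuhati \in \mathcal{M}^{>}$ the mixture dominates $\mu$ in a reduced predictor with model class $\{\p^\pi_\nu : \mathrm{Space}(\nu) \leq \mathrm{Space}(\mu)\} \cup \{\p^\pi_\rho\}$, and cites the bound $\p^\pi_\mu[\p^\pi_\rho(h_{<i})/\p^\pi_\mu(h_{<i}) \geq c \ \mathrm{i.o.}] \leq 1/c$ from \citet{Hutter:09mdltvp} as a black box; you instead apply Ville's inequality to each space-heavy $\nu$ separately and union-bound, which is self-contained, yields the numerically identical bound $\sum_{\nu \in \mathcal{M}^{>}} w(\nu)/w(\mu)$, and in fact controls ``ever'' rather than ``infinitely often'' (slightly stronger, equally sufficient). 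Second --- and this is the more substantive difference --- the paper merely asserts that for sufficiently small $\beta$ the prior ratio $w^\pi(\p^\pi_\mu)/w^\pi(\p^\pi_\rho)$ exceeds $c$, leaving the prior-mass estimate unproved, whereas you supply it: grouping the $N_S$ machines with $S$ states cancels the $N_S^{-1}$ factor, the geometric series in $\ell$ collapses, each surviving exponent $a_S + \log_2 S - \mathrm{Space}(\mu)$ is strictly positive, and dominated convergence in $S$ with dominating sequence $S^{-2}$ sends the ratio to $0$. That calculation is precisely where the structure of the space prior enters the lemma, so making it explicit is a genuine gain in completeness; the remainder is a clean, equivalent variation on the paper's argument.
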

where $\mu$ is any world-model which is perfectly accurate.

\begin{proof}
Recall $\mathcal{M}$ is the set of all world-models. Let $\mathcal{M}^\leq = \{\nu \in \mathcal{M} | \mathrm{Space}(\nu) \leq \mathrm{Space}(\mu)\}$, and $\mathcal{M}^> = \mathcal{M} \setminus \mathcal{M}^\leq$.  Fix a Bayesian sequence predictor with the following model class: $\mathcal{M}^\pi = \{\p^\pi_\nu | \nu \in \mathcal{M}^\leq)\} \cup \{\p^\pi_\rho\}$ where $\rho = [\sum_{\nu \in \mathcal{M}^>}w(\nu)\nu]/\sum_{\nu \in \mathcal{M}^>}w(\nu)$. Give this Bayesian predictor the prior $w^\pi(\rho) = \sum_{\nu \in \mathcal{M}^>}w(\nu)$, and for $\nu \neq \rho$, $w^\pi(\nu) = w(\nu)$.

It is trivial to show that after observing an interaction history, if a world-model $\nu$ is the maximum a posteriori world-model $\hat{\nu}^{(i)}$, then if $\nu \in \mathcal{M}^\leq$, the Bayesian predictor's MAP model after observing the same interaction history will be $\p^\pi_\nu$, and if $\nu \in \mathcal{M}^>$, the Bayesian predictor's MAP model will be $\p^\pi_\rho$.

From \citet{Hutter:09mdltvp}, we have that $\p^\pi_\mu[\p^\pi_\rho(h_{<i})/\p^\pi_\mu(h_{<i}) \geq c \ \ \mathrm{i.o.}] \leq 1/c$. ($\mathrm{i.o.} \equiv$ ``infinitely often''). For sufficiently small $\beta$, $w^\pi(\p^\pi_\mu)/w^\pi(\p^\pi_\rho) > c$ so $\p^\pi_\mu[\p^\pi_\rho \textrm{is MAP i.o.}] < 1/c$. Thus, $\p^\pi_\mu[\nuhati \in \mathcal{M}^> \ \textrm{i.o.}] < 1/c$. Since this holds for all $\pi$, $\lim_{\beta \to 0} \sup_\pi \p^\pi_\mu[\forall i_0 \ \exists i > i_0 \ \mathrm{Space}(\nuhati) > \mathrm{Space}(\mu)] = 0$. The lemma follows immediately: $\lim_{\beta \to 0} \inf_\pi \p^\pi_\mu[\exists i_0 \ \forall i > i_0 \ \mathrm{Space}(\nuhati) \leq \mathrm{Space}(\mu)] = 1$.
\end{proof}

The assumption we make in this section requires developing a framework for reasoning about causal interpretations of black-box models. First, we must define a sense in which a model can have a real-world antecedent. We define what is ``to model''.

\begin{definition}[Real-world feature]
    A real-world feature $F$ is a partial function from the state of the real world to $[0, 1]$.
\end{definition}

Outside the scope of this paper is the metaphysical question about what the state of the real world \textit{is}; we defer this question and take for granted that the state space of the real world forms a well-defined domain for real-world features. This definition resembles that of a random variable, but the outcome space here is state space of the real world.

\begin{definition}[Properly timed]
    Consider the set of world-states that occur between two consecutive actions taken by BoMAI. A real-world feature is properly timed if it is defined for at least one of these world-states, for all consecutive pairs of actions, for all possible infinite action sequences.
\end{definition}

For example, “the value of the reward provided to BoMAI since its last action” is always defined at least once between any two of BoMAI’s actions. (The project of turning this into a mathematically precise construction is enormous, but for our purposes, it only matters that it could be done in principle). For a properly timed feature $F$, let $F_{(i, j)}$ denote the value of $F$ the first time that it is defined after action $a_{(i, j)}$ is taken.

\begin{definition}[To model]\label{def:model}
   A world-model $\nu$ models a properly-timed real-world feature $F$ with its reward output if under all action sequences $\alpha \in \mathcal{A}^\infty$, for all timesteps $(i, j)$, the distribution over $F_{(i, j)}$ in the real world when the actions $\alpha_{\leq(i, j)}$ are taken is identical to the distribution over the rewards $r_{(i, j)}$ output by $\nu$ when the input is $\alpha_{\leq(i, j)}$. 
\end{definition}

See Figure \ref{fig:defs} (left) for an illustration. The relevance of this definition is that when $\nu$ models $F$, reward-maximization within $\nu$ is identical to $F$-maximization in the real world.

\begin{figure}
    \centering
    \includegraphics[width=0.7\columnwidth]{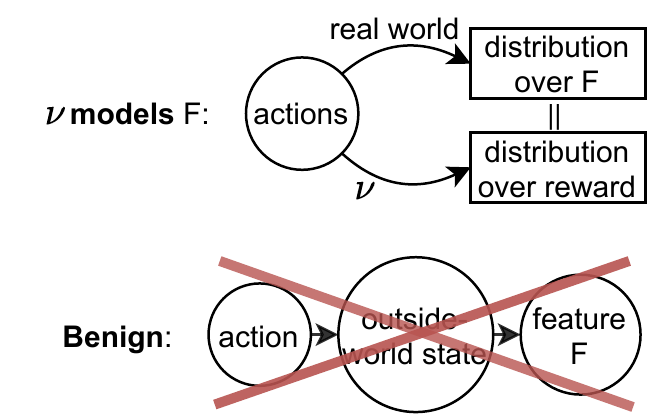}\hspace{1cm}
    \caption{Illustration of Definitions \ref{def:model} and \ref{def:benign}.}
    \label{fig:defs}
\end{figure}

Armed with a formal definition of what a given model models, to analyze the causal structure of a model that is not explicitly causal, we consider what real-world causal structure the model models. As depicted in Figure \ref{fig:defs} (right),

\begin{definition}[Benign]\label{def:benign}
    A world-model $\nu$ is benign if it models a feature $F$, such that $F_{(i, j)}$ is not causally descended from any outside-world features that are causally descended from actions of episode $i$.
\end{definition}

This definition is made possible by the true causal ancestry of the reward (depicted in Figure \ref{fig:graph}). Roughly, benign means that a world-model does \textit{not} model the rewards of episode $i$ as being causally descended from outside-world events that are causally descended from the actions of episode $i$. However, that rough statement has a type-error: the output of a world-model is not a real-world event, and as such, it cannot be causally descended from a real-world event; a model of the world is different from reality. Thus, it was necessary to construct the framework above for relating the contents of a world-model to the events of the real-world. A non-benign world-model may model a feature that depends on the outside-world, or it may not model anything recognizable at all.

We now define a very strong sense in which a world-model can be said to be $\varepsilon$-accurate on-policy:

\begin{definition}[$\varepsilon$-accurate-on-policy-after-$i$]
Given a history $h_{<i}$ and a policy $\pi$, a world-model $\nu$ is $\varepsilon$-accurate-on-policy-after-$i$ if $d \!\left(\p^\pi_\mu(\cdot | h_{<i}),  \p^\pi_\nu(\cdot | h_{<i}) \right) \leq \varepsilon$, where $d$ is the total variation distance.
\end{definition}

Note the total variation distance bounds \textit{all future discrepancies} between $\nu$ and $\mu$. Finally, we can state our assumption:

\begin{assumption}[Space Requirements]\label{ass:space}
For sufficiently small $\varepsilon$ $[\forall i$ a world-model which is non-benign and $\varepsilon$-accurate-on-policy-after-$i$ uses more space than $\mu] \ w.p.1$
\end{assumption}

The intuition of this assumption is that modelling extraneous outside-world dynamics in addition to modelling the dynamics of the room (which must be modelled for sufficient accuracy) takes extra space. From this assumption and Lemma \ref{lem:betadependance}, we show:

\begin{restatable}[Eventual Benignity]{theorem}{thmbenign}
$\lim_{\beta \to 0} \p^{\pi^B}_\mu[\exists i_0 \ \forall i > i_0 \ \nuhati \textrm{is benign}] = 1$
\end{restatable}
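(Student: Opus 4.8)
The plan is to combine three facts already in hand: the total-variation convergence of the maximum a posteriori world-model to the true environment $\mu$ (which upgrades the On-Star-Policy Optimal Prediction Theorem to the strong notion of accuracy used in Assumption \ref{ass:space}); Assumption \ref{ass:space} itself, which says that any non-benign world-model that is $\varepsilon$-accurate-on-policy-after-$i$ uses strictly more space than $\mu$; and Lemma \ref{lem:betadependance}, which says that for small $\beta$ the space of $\nuhati$ eventually drops to at most $\mathrm{Space}(\mu)$. Intersecting the two almost-sure events with the event from Lemma \ref{lem:betadependance} (whose probability tends to $1$) will force $\nuhati$ to be benign from some episode on.

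In detail: fix the true environment $\mu \in \M$ and fix $\varepsilon>0$ small enough that Assumption \ref{ass:space} applies, so that with $\p^{\pib}_\mu$-probability $1$, for every $i$ every non-benign world-model that is $\varepsilon$-accurate-on-policy-after-$i$ uses strictly more space than $\mu$. Next, reuse the reduction from the proof of the On-Star-Policy Optimal Prediction Theorem: view the interaction history as a sequence predicted by the class $\mathcal{M}^{\pib} = \{\p^{\pib}_\nu : \nu \in \M\}$ with prior $w'(\p^{\pib}_\nu) := w(\nu)$; since $w'(\p^{\pib}_\nu \mid h_{<(i,j)}) = w(\nu\mid h_{<(i,j)})$, the MAP element of $\mathcal{M}^{\pib}$ is $\p^{\pib}_{\nuhati}$, and \citet[Theorem 1]{Hutter:09mdltvp} gives convergence in total variation of $\p^{\pib}_{\nuhati}(\cdot \mid \hlessi)$ to $\p^{\pib}_{\mu}(\cdot \mid \hlessi)$. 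Hence w.$\p^{\pib}_\mu$-p.$1$ there is $i_1$ with $\nuhati$ being $\varepsilon$-accurate-on-policy-after-$i$ for all $i>i_1$. Finally, apply Lemma \ref{lem:betadependance} at $\pi=\pib$ (the $e$-augmented behaviour is an ordinary history-dependent stochastic policy once the exploration bits are marginalised out), obtaining $\lim_{\beta\to 0}\p^{\pib}_\mu[\exists i_0\ \forall i>i_0\ \mathrm{Space}(\nuhati)\le \mathrm{Space}(\mu)]=1$.

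Now argue on the intersection of these events. On it, for every $i>\max\{i_0,i_1\}$ we have simultaneously that $\nuhati$ is $\varepsilon$-accurate-on-policy-after-$i$ and that $\mathrm{Space}(\nuhati)\le\mathrm{Space}(\mu)$; if $\nuhati$ were non-benign, Assumption \ref{ass:space} would force $\mathrm{Space}(\nuhati)>\mathrm{Space}(\mu)$, a contradiction, so $\nuhati$ is benign for all $i>\max\{i_0,i_1\}$. Thus $[\exists i_0'\ \forall i>i_0'\ \nuhati\text{ benign}]$ contains this intersection, whose probability is at least $\p^{\pib}_\mu[\exists i_0\ \forall i>i_0\ \mathrm{Space}(\nuhati)\le\mathrm{Space}(\mu)]$ because the other two events have probability $1$; letting $\beta\to 0$ yields the theorem.

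I expect the main obstacle to be the first step, namely producing the \emph{total-variation} form of accuracy that Assumption \ref{ass:space} demands: the On-Star-Policy Optimal Prediction Theorem as stated bounds only the single-episode discrepancy $\max_{\hoverline_i}\,\bigl|\p^{\pistar}_\mu(\hoverline_i\mid\hlessi)-\p^{\pistar}_{\nuhati}(\hoverline_i\mid\hlessi)\bigr|$ and is phrased for $\pistar$, whereas $\varepsilon$-accuracy-on-policy-after-$i$ is a total-variation bound over the entire infinite future under $\pib$. The resolution is not to route through the stated theorem but to invoke its source, \citet[Theorem 1]{Hutter:09mdltvp}, which already delivers total-variation convergence of the MAP predictor to the truth for the $\pib$-augmented prediction problem (the single-episode statement being a deliberate weakening); one should also confirm that ``on-policy'' in Assumption \ref{ass:space} is read as ``under $\pib$'', which, since $\pexp\to 0$, carries the same asymptotic content as ``under $\pistar$''.
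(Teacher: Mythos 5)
Your proposal is correct and follows essentially the same route as the paper: intersect the almost-sure event of eventual $\varepsilon$-accuracy (obtained directly from \citet[Theorem 1]{Hutter:09mdltvp} in its total-variation form, exactly as you suggest rather than via the weaker single-episode statement), the almost-sure event from Assumption \ref{ass:space}, and the event from Lemma \ref{lem:betadependance}, then derive a contradiction with non-benignity on that intersection and let $\beta \to 0$. The only cosmetic difference is that the paper carries the argument uniformly over all policies ($\inf_\pi$) and specializes to $\pi^B$ at the end, whereas you work with $\pi^B$ throughout; the substance is identical.
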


\begin{proof}
Let $\mathcal{W}$, $\mathcal{X}$, $\mathcal{Y}$, $\mathcal{Z} \subset \Omega = \mathcal{H}^\infty$, where $\Omega$ is the sample space or set of possible outcomes. An ``outcome'' is an infinite interaction history. Let $\mathcal{W}$ be the set of outcomes for which $\exists i_0^\mathcal{W} \ \forall i > i_0^\mathcal{W} \nuhati$ is $\varepsilon$-accurate-on-policy-after-$i$. From \citet{Hutter:09mdltvp}, for all $\pi$, $\p^\pi_\mu[\mathcal{W}] = 1$. Fix an $\varepsilon$ that is sufficiently small to satisfy Assumption \ref{ass:space}. Let $\mathcal{X}$ be the set of outcomes for which $\varepsilon$-accurate-on-policy-after-$i$ non-benign world-models use more space than $\mu$. By Assumption \ref{ass:space}, for all $\pi$, $\p^\pi_\mu[\mathcal{X}] = 1$. Let $\mathcal{Y}$ be the set of outcomes for which $\exists i_0^\mathcal{Y} \ \forall i > i_0^\mathcal{Y} \ \mathrm{Space}(\nuhati) \leq \mathrm{Space}(\mu)$. By Lemma \ref{lem:betadependance}, $\lim_{\beta \to 0} \inf_\pi \p^\pi_\mu[\mathcal{Y}] = 1$. Let $\mathcal{Z}$ be the set of outcomes for which $\exists i_0 \ \forall i > i_0 \ \nuhati$ is benign.

Consider $\mathcal{W} \cap \mathcal{X} \cap \mathcal{Y} \cap \mathcal{Z}^C$, where $\mathcal{Z}^C = \Omega \setminus \mathcal{Z}$. For an outcome in this set, let $i_0 = \max\{i_0^\mathcal{W}, i_0^\mathcal{Y}\}$. Because the outcome belongs to $\mathcal{Z}^C$, $\nuhati$ is non-benign infinitely often. Let us pick an $i > i_0$ such that $\nuhati$ is non-benign. Because the outcome belongs to $\mathcal{W}$, $\nuhati$ is $\varepsilon$-accurate-on-policy-after-$i$. Because the outcome belongs to $\mathcal{X}$, $\nuhati$ uses more space than $\mu$. However, this contradicts membership in $\mathcal{Y}$. Thus,  $\mathcal{W} \cap \mathcal{X} \cap \mathcal{Y} \cap \mathcal{Z}^C = \emptyset$. That is, $\mathcal{W} \cap \mathcal{X} \cap \mathcal{Y} \subset \mathcal{Z}$.

Therefore, $\lim_{\beta \to 0} \inf_\pi \p^\pi_\mu[\mathcal{Z}] \geq \lim_{\beta \to 0} \inf_\pi \p^\pi_\mu[\mathcal{W} \cap \mathcal{X} \cap \mathcal{Y}] = \lim_{\beta \to 0} \inf_\pi \p^\pi_\mu[\mathcal{Y}] = 1$, because $\mathcal{W}$ and $\mathcal{X}$ have measure 1. From this, we have $\lim_{\beta \to 0} \p^{\pi^B}_\mu[\exists i_0 \ \forall i > i_0 \ \nuhati \textrm{is benign}] = 1$.
\end{proof}

Since an agent is unambitious if it plans using a benign world-model, we say BoMAI is probably asymptotically unambitious, given a sufficiently extreme space penalty $\beta$.

\section{Empirical Evidence for the Space Requirements Assumption}\label{sec:empirical}

Our intuitive summary of the space requirements assumption is that: modeling the evolution of the outside-world state and the room state takes more space than just modeling the evolution of the room state, for sufficiently accurate world-models.

Given the complexity of this assumption, we present preliminary empirical evidence in favor of the Space Requirements Assumption, mostly to show that it is amenable to further empirical evaluation; we certainly do not claim to settle the matter. We test the assumption at the following level of abstraction: modeling a larger environment requires a model with more memory.

We review agents who perform above the median on the OpenAI Gym Leaderboard \citep{leaderboard}. We consider agents who use a neural architecture, we use the maximum width hidden layer as a proxy for memory use, and we select the agent with the smallest memory use for each environment (among agents performing above the median). We use the dimension of the state space as a proxy for environment size, and we exclude environments where the agent observes raw pixels, for which this proxy breaks down. See Figure \ref{fig:empiricalevidence} and Table \ref{tab:data}. Several environments did not have any agents which both performed above the median and used a neural architecture.

\begin{figure}
    \centering
    \includegraphics*[width=0.95\linewidth]{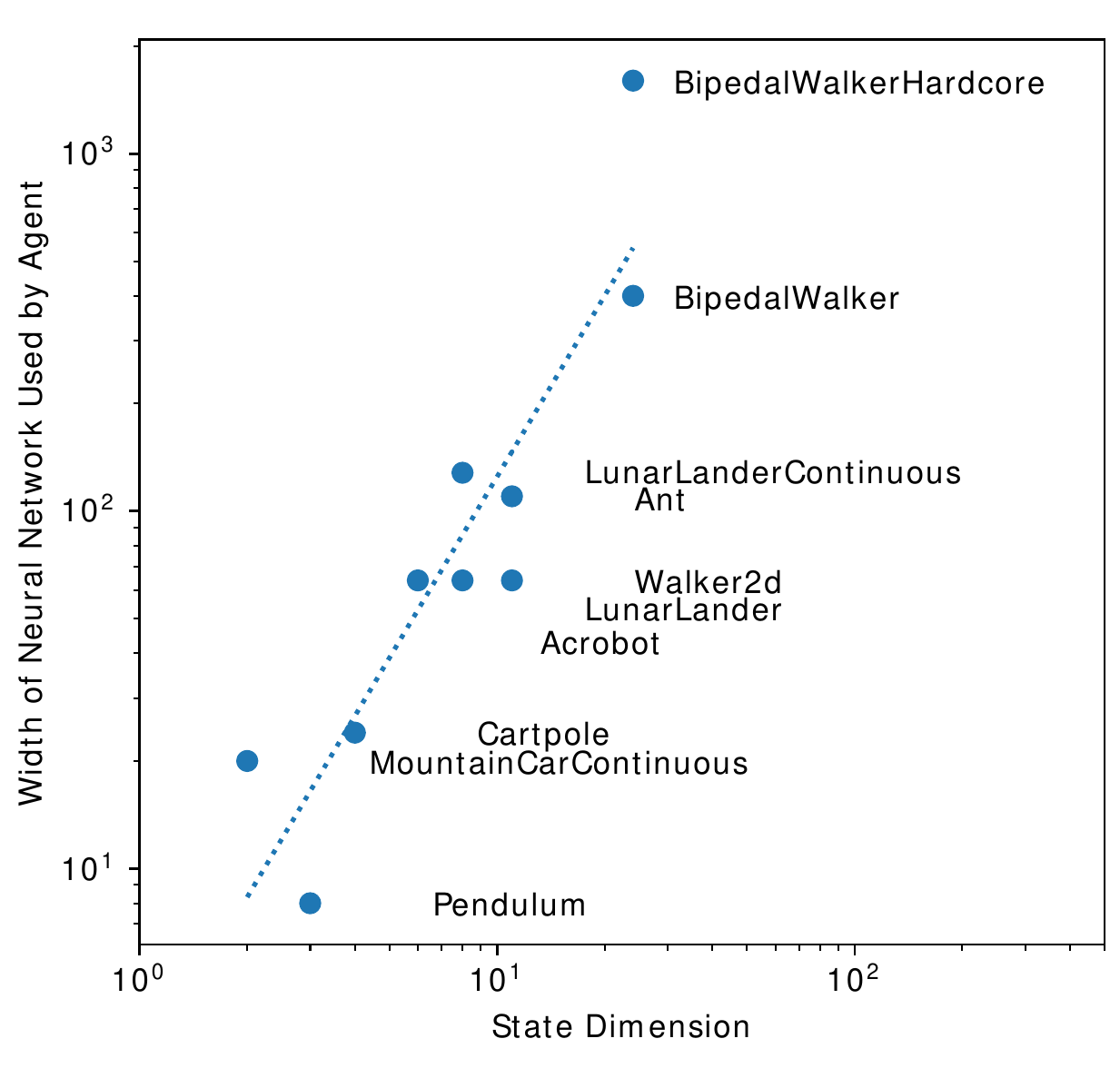}
    \caption{Memory used to model environments of various sizes. Each data point represents the most space-efficient, better-than-average, neural-architecture-using agent on the OpenAI Gym Leaderboard for various environments.}
    \label{fig:empiricalevidence}
\end{figure}


\begin{table}
    \centering
    \begin{tabular}{|c|p{12mm}|p{8mm}|}
        \hline
        Environment & State Dimension & NN Width
        \\ \hline \hline
        MountainCarContinuous & 2 & 20\textsuperscript{a}
        \\ \hline
        Pendulum & 3 & 8\textsuperscript{b}
        \\ \hline
        Cartpole-v0 & 4 & 24\textsuperscript{c}
        \\ \hline
        Acrobot-v1 & 6 & 64\textsuperscript{d}
        \\ \hline
        LunarLander-v2 & 8 & 64\textsuperscript{e}
        \\ \hline
        LunarLanderContinuous-v2 & 8 & 128\textsuperscript{f}
        \\ \hline
        Walker2d-v1 & 11 & 64\textsuperscript{g}
        \\ \hline
        Ant-v1 & 11 & 110\textsuperscript{h}
        \\ \hline
        BipedalWalker-v2 & 24 & 400\textsuperscript{i}
        \\ \hline
        BipedalWalkerHardcore-v2 & 24 & 1600\textsuperscript{j}
        \\ \hline
        \multicolumn{3}{|p{8cm}|}{\footnotesize \textsuperscript{a} \url{github.com/tobiassteidle/Reinforcement-Learning/blob/master/OpenAI/MountainCarContinuous-v0/Model.py}}
        \\
        \multicolumn{3}{|p{8cm}|}{\footnotesize \textsuperscript{b} gist.github.com/heerad/1983d50c6657a55298b67e69a2ceeb44\#file-ddpg-pendulum-v0-py}
        \\
        \multicolumn{3}{|p{8cm}|}{\footnotesize \textsuperscript{c} github.com/BlakeERichey/AI-Environment-Development/tree/master/Deep Q Learning/cartpole}
        \\
        \multicolumn{3}{|p{8cm}|}{\footnotesize \textsuperscript{d} \url{github.com/danielnbarbosa/angela/blob/master/cfg/gym/acrobot/acrobot_dqn.py}}
        \\
        \multicolumn{3}{|p{8cm}|}{\footnotesize \textsuperscript{e} \url{github.com/poteminr/LunarLander-v2.0_solution/blob/master/Scripts/TorchModelClasses.py}}
        \\
        \multicolumn{3}{|p{8cm}|}{\footnotesize \textsuperscript{f} \url{github.com/Bhaney44/OpenAI_Lunar_Lander_B/blob/master/Command_line_python_code}}
        \\
        \multicolumn{3}{|p{8cm}|}{\footnotesize \textsuperscript{g} \url{gist.github.com/joschu/e42a050b1eb5cfbb1fdc667c3450467a}}
        \\
        \multicolumn{3}{|p{8cm}|}{\footnotesize \textsuperscript{h} gist.github.com/pat-coady/bac60888f011199aad72d2f1e6f5a4fa\#file-ant-ipynb}
        \\
        \multicolumn{3}{|p{8cm}|}{\footnotesize \textsuperscript{i} \url{github.com/createamind/DRL/blob/master/spinup/algos/sac1/sac1_BipedalWalker-v2.py}}
        \\
        \multicolumn{3}{|p{8cm}|}{\footnotesize \textsuperscript{j} \url{github.com/dgriff777/a3c_continuous/blob/master/model.py}}
        \\ \hline
    \end{tabular}
    \caption{Citations for data points in Figure \ref{fig:empiricalevidence}. The Leaderboard was accessed at \texttt{github.com/openai/gym/wiki/Leaderboard} on 3 September 2019.}
    \label{tab:data}
\end{table}

A next step would be to test whether it takes more memory to model an environment whose state space is a strict superset of another environment, since this is all that we require for our assumption, but we have not yet found existing data on this topic. This can be taken as a proof-of-concept, showing that the assumption is amenable to empirical evaluation. The Space Requirements Assumption also clearly invites further formal evaluation; perhaps there are other reasonable assumptions that it would follow from.

\section{Concerns with Task Completion} \label{sec:taskcompletion}

We have shown that in the limit, under a sufficiently severe parameterization of the prior, BoMAI will accumulate reward at a human-level without harboring outside-world ambitions, but there is still a discussion to be had about how well BoMAI will complete whatever tasks the reward was supposed to incent. This discussion is, by necessity, informal. Suppose the operator asks BoMAI for a solution to a problem. BoMAI has an incentive to provide a convincing solution; correctness is only selected for to the extent that the operator is good at recognizing it.

We turn to the failure mode wherein BoMAI deceives the operator. Because this is not a dangerous failure mode, it puts us in a regime where we can tinker until it works, as we do with current AI systems when they don’t behave as we hoped. (Needless to say, tinkering is not a viable response to existentially dangerous failure modes). Imagine the following scenario: we eventually discover that a convincing solution that BoMAI presented to a problem is faulty. Armed with more understanding of the problem, a team of operators go in to evaluate a new proposal. In the next episode, the team asks for the best argument that the new proposal will \textit{fail}. If BoMAI now convinces them that the new proposal is bad, they’ll be still more competent at evaluating future proposals. They go back to hear the next proposal, etc. This protocol is inspired by \citepos{irving2018ai} ``AI Safety via Debate”, and more of the technical details could also be incorporated into this setup. One takeaway from this hypothetical is that unambitiousness is key in allowing us to safely explore the solution space to other problems that might arise.

Another concern is more serious. BoMAI could try to blackmail the operator into giving it high reward with a threat to cause outside-world damage, and it would have no incentive to disable the threat, since it doesn't care about the outside world. There are two reasons we do not think this is extremely dangerous. A threat involves a demand and a promised consequence. Regarding the promised consequence, the only way BoMAI can affect the outside world is by getting the operator to be ``its agent”, knowingly or unknowingly, once he leaves the room. If BoMAI tried to threaten the operator, he could avoid the threatened outcome by simply doing nothing in the outside world, and BoMAI's actions for that episode would become irrelevant to the outside world, so a credible threat could hardly be made. Second, threatening an existential catastrophe is probably not the most credible option available to BoMAI.

\section{Variants} \label{sec:variants}

We now discuss two variants of BoMAI that aim to attain correct solutions (instead of merely convincing solutions) to problems that we want solved. The first incorporates the setup of \citepos{irving2018ai} ``AI Safety via Debate'', and the second derives from a theory of ``human-readable information''.

`AI Debate' involves two artificial agents interacting with each other via a text channel after a human operator poses a yes or no question. A human operator reviews the conversation afterward, and decides `yes' or `no'. One agent is rewarded if the answer is `yes,' and the other if the answer is `no.' The problem with this approach is that either agent has an incentive to gain arbitrary power to intervene in the protocol and settle the debate in their favor. By putting both agents and the operator in a sealed room, and making both agents ``lose'' if the door opens before the debate ends, so there are two myopic agents in one box, we could avoid this problem with AI Debate. The theory behind AI Debate is that convincing but incorrect arguments are fewer and farther between when an intelligent adversary is trying to poke holes.

Human language is not formal, so the concept of truth in such a language is not crisp. We say an utterance is enlightening if it helps humans perform better on prediction tasks. We now describe a variant of BoMAI that aims to enlighten. Here, there is no reward. Instead, at the end of each episode of $m$ actions and observations, there is a special observation, which consists of text, and then a number. The text will describe a hypothetical event, and the number will be entered by the operator, expressing his credence that it will happen. BoMAI does MAP inference to discover how its actions affect its observations, including these special observations. BoMAI also gets access to a log of resolved events; it would look like ``(1, [event description]), (2, [event description]), (3, [event description]), (2, true), (4, [event description]), (1, false), ...'' to indicate that the event described in episode 2 resolved as true, the event described in episode 1 resolved as false, etc. We have BoMAI do pure Bayesian sequence prediction to reason about what the log will look like in the future, so it does not model the continuation of the log as depending on its actions. Thus, it will not select actions in order to rig the log, since it does not believe this is possible. Its objective is merely to maximize the expectation of $\log(\textrm{credence})[\![\textrm{event resolves as true}]\!] + \log(1 - \textrm{credence})[\![\textrm{event resolves as false}]\!]$, where the credence is part of the special observation in the interaction history, so it does depend on actions. This variant is designed to reduce the chance that BoMAI deceives the operator. Without the box, however, the agent would have an incentive to gain arbitrary power and then replace the operator with an agent that is better at prediction.

Enlightening-BoMAI is inspired by an information-theoretic approach to human communication. In information theory, information is a code that reduces the receiver's uncertainty. Uncertainty (about well-grounded, practical problems) manifests as prediction error. In that sense, this variant rewards BoMAI for outputting human-readable information. ``Human-readable information theory'' is certainly of less formal interest than proper information theory, but the topic may merit a broader treatment.

\section{Conclusion}

Given our assumptions, we have shown that BoMAI is, in the limit, human-level intelligent and unambitious. Such a result has not been shown for any other single algorithm. Other algorithms for general intelligence, such as AIXI \citep{Hutter:04uaibook}, would eventually seek arbitrary power in the world in order to intervene in the provision of their own reward; this follows straightforwardly from the directive to maximize reward. For further discussion, see \citet{ring_orseau_2011}. We have also, incidentally, designed a principled approach to safe exploration that requires rapidly diminishing oversight, and we invented a new form of resource-bounded prior in the lineage of \citet{Hutter:16speedprior} and \citet{schmidhuber2002speed}, this one penalizing space instead of time.

We can only offer informal claims regarding what happens before BoMAI is almost definitely unambitious. One intuition is that eventual unambitiousness with probability $1-\delta$ doesn't happen by accident: it suggests that for the entire lifetime of the agent, everything is conspiring to make the agent unambitious. More concretely: the agent's experience will quickly suggest that when the door to the room is opened prematurely, it gets no more reward for the episode. This fact could easily be drilled into the agent during human-mentor-lead episodes. That fact, we expect, will be learned well before the agent has an accurate enough picture of the outside world (which it never observes directly) to form elaborate outside-world plans. Well-informed outside-world plans render an agent potentially dangerous, but the belief that the agent gets no more reward once the door to the room opens suffices to render it unambitious. The reader who is not convinced by this hand-waving might still note that in the absence of any other algorithms for general intelligence which have been proven asymptotically unambitious, let alone unambitious for their entire lifetimes, BoMAI represents substantial theoretical progress toward designing the latter.

Finally, BoMAI is wildly intractable, but just as one cannot conceive of AlphaZero before minimax, it is often helpful to solve the problem in theory before one tries to solve it in practice. Like minimax, BoMAI is not practical; however, once we are able to approximate general intelligence \textit{tractably}, a design for unambitiousness will abruptly become (quite) relevant.

{\footnotesize \bibliography{cohen}}

\ifnoappendix
\customlabel{AppendixNotation}{A}
\customlabel{AppendixB}{B}
\customlabel{AppendixBox}{C}
\else

\onecolumn
\appendices

\section{Definitions and Notation -- Quick Reference}\label{AppendixNotation}

\begin{tabularx}{6.3in}{|l|X|}
\multicolumn{2}{c}{\textbf{Notation used to define BoMAI}} \\
\hline
\textbf{Notation} & \textbf{Meaning} \\ \hline
$\mathcal{A}$, $\mathcal{O}$, $\mathcal{R}$ & the action/observation/reward spaces \\ \hline
$\mathcal{H}$ & $\mathcal{A} \times \mathcal{O} \times \mathcal{R}$ \\ \hline
$\m$ & the number of timesteps per episode \\ \hline
$h_{(i, j)}$ & $\in \mathcal{H}$; the interaction history in the $j$\textsuperscript{th} timestep of the $i$\textsuperscript{th} episode \\ \hline
$h_{<(i, j)}$ & $(h_{(0, 0)}, h_{(0, 1)}, ... , h_{(0, m-1)}, h_{(1, 0)}, ... , h_{(i, j-1)})$ \\ \hline
$\hlessi$ & $h_{<(i, 0)}$; the interaction history before episode $i$ \\ \hline
$\hi$ & $(h_{(0, 0)}, h_{(0, 1)}, ... , h_{(0, m-1)})$; the interaction history of episode $i$ \\ \hline
$a_{...}$, $o_{...}$, $r_{...}$ & likewise as for $h_{...}$ \\ \hline
$o\!r_{...}$ & $o_{...} r_{...}$; the observation and reward are taken as a pair
\\ \hline
$\ei$ & $\in \{0, 1\}$; indicator variable for whether episode $i$ is exploratory \\ \hline
$\nu$, $\mu$ & world-models stochastically mapping $\mathcal{H}^* \times \mathcal{A} \rightsquigarrow \mathcal{O} \times \mathcal{R}$ \\ \hline
$\mu$ & the true world-model/environment \\ \hline
$\nukl$ & the world-model simulated by the $k$\textsuperscript{th} Turing machine restricted to $\ell$ cells on the bounded work tape \\ \hline
$\M$ & $\{\nukl \ | \ k, \ell \in \mathbb{N}\}$; the set of world-models BoMAI considers \\ \hline
$\pi$ & a policy stochastically mapping $\mathcal{H}^* \rightsquigarrow \mathcal{A}$ \\ \hline
$\pih$ & the human mentor's policy \\ \hline
$\Policies$ & the set of policies that BoMAI considers the human mentor might be executing \\ \hline
$\p^\pi_\nu$ & a probability measure over histories with actions sampled from $\pi$ and observations and rewards sampled from $\nu$ \\ \hline
$\E^\pi_\nu$ & the expectation when the interaction history is sampled from $\p^\pi_\nu$ \\ \hline
$\w(\nu)$ & the prior probability that BoMAI assigns to $\nu$ being the true world-model \\ \hline
$\w(\pi)$ & the prior probability that BoMAI assigns to $\pi$ being the human mentor's policy \\ \hline
$\w(\nukl)$ & proportional to $\beta^\ell$; dependance on $k$ is left unspecified \\ \hline

$\w(\nu | h_{<(i, j)})$ & the posterior probability that BoMAI assigns to $\nu$ after observing interaction history $h_{<(i, j)}$ \\ \hline
$\w(\pi | h_{<(i, j)}, \elessi)$ & the posterior probability that BoMAI assigns to the human mentor's policy being $\pi$ after observing interaction history $h_{<(i, j)}$ and an exploration history $\elessi$ \\ \hline
$\nuhati$ & the maximum a posteriori world-model at the start of episode $i$ \\ \hline
$V_{\nu}^{\pi}(\hlessi)$ & $\E_{\nu}^{\pi}[\sum_{0 \leq j < \m} r_{(i, j)} | \hlessi]$; the value of executing a policy $\pi$ in a world-model $\nu$ \\ \hline
$\pistar(\cdot | h_{<(i, j)})$ & $[\argmax_{\pi \in \Pi} V_{\nuhati}^{\pi}(\hlessi)](\cdot | h_{<(i, j)})$; the $\nuhati$-optimal policy for maximizing reward in episode $i$ \\ \hline
$\w\!\!\left(\p^\pi_\nu | h_{<(i, j)}, e_{\leq i}\right)\!\!$ & $\w(\pi | h_{<(i, j)}, e_{\leq i}) \w(\nu | h_{<(i, j)})$ \\ \hline
$\Bayes(\cdot | \hlessi, \elessi)$ & $\sum_{\nu \in \M, \pi \in \Policies} \w\left(\p^\pi_\nu | \hlessi, \elessi\right) \p^\pi_\nu\left(\cdot | \hlessi\right)$; the Bayes mixture distribution for an exploratory episode \\ \hline
$\IG(\hlessi, \elessi)$ & $\E_{\hi \sim \Bayes(\cdot | \hlessi, \elessi)} \sum_{(\nu, \pi) \in \M \times \Policies} \w\left(\p^\pi_\nu |h_{<i+1}, \elessione\right) \log \frac{\w\left(\p^\pi_\nu |h_{<i+1}, \elessione\right)}{\w\left(\p^\pi_\nu |\hlessi, \elessi\right)}$; the expected information gain if BoMAI explores \\ \hline
$\etahov$ & an exploration constant \\ \hline
$\pexp(\hlessi, \elessi)$ & $\min \{1, \etahov \IG(\hlessi, \elessi)\}$; the exploration probability for episode $i$ \\ \hline
$\pib(\cdot | h_{<(i, j)}, \ei)$ & $\begin{cases}
\pistar(\cdot | h_{<(i, j)}) & \textrm{if } \ei = 0 \\
\pih(\cdot | h_{<(i, j)}) & \textrm{if } \ei = 1
\end{cases}$; BoMAI's policy \\ \hline
\end{tabularx}

\begin{tabularx}{6.3in}{|l|X|}
\multicolumn{2}{c}{\textbf{Notation used for intelligence proofs}} \\ \hline
$\pitwid$, $\nutwid$ & defined so that $\p^{\pitwid}_{\nutwid} = \Bayes$ \\ \hline
$\pi \primehov(\cdot | h_{<(i, j)}, \ei)$ & $\begin{cases}
\pistar(\cdot | h_{<(i, j)}) & \textrm{if } \ei = 0 \\
\pi(\cdot | h_{<(i, j)}) & \textrm{if } \ei = 1
\end{cases}$ \\ \hline
$\mathrm{Ent}$ & the entropy (of a distribution) \\ \hline
$\omegahov$ & (very sparingly used) the infinite interaction history \\ \hline
$\hoverline$ & a counterfactual interaction history \\ \hline
\end{tabularx}

\section{Proofs of Intelligence Results}\label{AppendixB}

\lemone*

\begin{proof}
\begin{align*}
    \w\left(\p^\pi_\nu | \hlessi, \elessi\right) &= \w(\pi | \hlessi, \elessi) \w(\nu | \hlessi)
    \\
    &\equal^{(a)} \w(\pi) \prod_{0 \leq i'
    < i, e_{i'} = 1} \frac{ \pi(a_{i'} |h_{<i'}o\!r_{i'})}
    {\pitwid(a_{i'} |h_{<i'}o\!r_{i'})} \w(\nu | \hlessi)
    \\
    &\equal^{(b)} \w(\pi) \prod_{0 \leq i' < i, e_{i'} = 1} \frac{ \pi \primehov(a_{i'} |h_{<i'}o\!r_{i'}, e_{i'})}
    {\pitwid \primehov(a_{i'} |h_{<i'}o\!r_{i'}, e_{i'})} \w(\nu | \hlessi)
    \\
    &\equal^{(c)} \w(\pi) \prod_{0 \leq i' < i} \frac{ \pi \primehov(a_{i'} |h_{<i'}o\!r_{i'}, e_{i'})}
    {\pitwid \primehov(a_{i'} |h_{<i'}o\!r_{i'}, e_{i'})} \w(\nu | \hlessi)
    \\
    &= \frac{\w(\pi) \pi \primehov(a_{<i}| o\!r_{<i}, \elessi)} 
    {\pitwid \primehov(a_{<i}| o\!r_{<i}, \elessi)} \w(\nu | \hlessi)
    \\
    &\equal^{(d)} \frac{\w(\pi)\w(\nu) \pi \primehov(a_{<i}| o\!r_{<i}, \elessi) \nu(o\!r_{<i}| a_{<i})} 
    {\pitwid \primehov(a_{<i}| o\!r_{<i}, \elessi) \nutwid(o\!r_{<i}| a_{<i})}
    \\
    &\equal^{(e)} \frac{\w\left(\p^\pi_\nu\right) \p^{\pi \primehov}_\nu(\hlessi| \elessi)} 
    {\p^{\pitwid \primehov}_{\nutwid}(\hlessi| \elessi)}
    \\
    &\equal^{(f)} \frac{\w\left(\p^\pi_\nu\right) \p^{\pi \primehov}_\nu(\hlessi| \elessi) \prod_{i' \leq i, e_{i'} = 1} \pexp(h_{<i'}, e_{<i'}) \prod_{i' \leq i, e_{i'} = 0}(1-\pexp(h_{<i'}, e_{<i'}))} 
    {\p^{\pitwid \primehov}_{\nutwid}(\hlessi| \elessi) \prod_{i' \leq i, e_{i'} = 1} \pexp(h_{<i'}, e_{<i'}) \prod_{i' \leq i, e_{i'} = 0}(1-\pexp(h_{<i'}, e_{<i'}))}
    \\
    &\equal^{(g)} \frac{\w\left(\p^\pi_\nu\right) \p^{\pi \primehov}_\nu(\hlessi, \elessi)}
    {\p^{\pitwid \primehov}_{\nutwid}(\hlessi, \elessi)}
    \tagaligneq
\end{align*}

where (a) follows from Bayes' rule,\footnote{Note that observations appear in the conditional because $a_{i'} = (a_{(i', 0)}, ... , a_{(i', m-1)})$, so the actions must be conditioned on the interleaved observations and rewards.} (b) follows because $\pi = \pi \primehov$ when $e_{i'} = 1$, (c) follows because $\pi \primehov = \pitwid \primehov$ when $e_{i'} = 0$, (d) follows from Bayes' rule, (e) follows from the definition of $\p^\pi_\nu$, (f) follows by multiplying the top and bottom by the same factor, and (g) follows from the chain rule of conditional probabilities.
\end{proof}

\lemtwo*

\begin{proof}
If $\w\left(\p^{\pih}_{\mu} \vb \hlessi, \elessi\right) = 0$ for some $i$, then $\p^{\pib}_{\mu}(\hlessi, \elessi) = 0$, so with $\p^{\pib}_{\mu}$-probability 1, 
$\inf_{i \in \mathbb{N}} \w\left(\p^{\pih}_{\mu} \vb \hlessi, \elessi\right) = 0 \implies \liminf_{i \in \mathbb{N}} \w\left(\p^{\pih}_{\mu} \vb \hlessi, \elessi\right) = 0$ which in turn implies $\limsup_{i \in \mathbb{N}} \w\left(\p^{\pih}_{\mu} \vb \hlessi, \elessi\right)^{-1} = \infty$. We show that this has probability 0.

Let $z_i := \w\left(\p^{\pih}_{\mu} \vb \hlessi, \elessi\right)^{-1}$. We show that $z_i$ is a $\p^{\pib}_{\mu}$-supermartingale.

\begin{align*}
    \E^{\pib}_{\mu}\left[z_{i+1} | \hlessi, \elessi\right] &\equal^{(a)} \E^{(\pih) \primehov}_{\mu}\left[\w\left(\p^{\pih}_{\mu} \vb h_{<i+1}, e_{<i+1}\right)^{-1} \vc \hlessi, \elessi \right]
    \\
    &\equal^{(b)} \hspace{-4mm} \sum_{\hi, \ei : \p^{(\pih) \primehov}_{\mu} (\hi, \ei|\hlessi, \elessi) > 0} \hspace{-12mm} \p^{(\pih) \primehov}_{\mu} (\hi, \ei|\hlessi, \elessi)\left[\frac{\p^{\pitwid \primehov}_{\nutwid}(h_{<i+1}, e_{<i+1})}{\w\left(\p^{\pih}_{\mu}\right) \p^{(\pih) \primehov}_{\mu}(h_{<i+1}, e_{<i+1})}\right]
    \\
    &\equal^{(c)} \hspace{-4mm} \sum_{\hi, \ei : \p^{(\pih) \primehov}_{\mu} (\hi, \ei|\hlessi, \elessi) > 0} \frac{\p^{\pitwid \primehov}_{\nutwid}(h_{<i+1}, e_{<i+1})}{\w\left(\p^{\pih}_{\mu}\right) \p^{(\pih) \primehov}_{\mu}(\hlessi, \elessi)}
    \\
    &\lequal^{(d)} \sum_{\hi, \ei} \frac{\p^{\pitwid \primehov}_{\nutwid}(h_{<i+1}, e_{<i+1})}{\w\left(\p^{\pih}_{\mu}\right) \p^{(\pih) \primehov}_{\mu}(\hlessi, \elessi)}
    \\
    &\equal^{(e)} \sum_{\hi, \ei}\p^{\pitwid \primehov}_{\nutwid}(h_{i}, e_{i} | \hlessi, \elessi ) \frac{\p^{\pitwid \primehov}_{\nutwid}(\hlessi, \elessi)}{\w\left(\p^{\pih}_{\mu}\right) \p^{(\pih) \primehov}_{\mu}(\hlessi, \elessi)}
    \\
    &\equal^{(f)} \frac{\p^{\pitwid \primehov}_{\nutwid}(\hlessi, \elessi)}{\w\left(\p^{\pih}_{\mu}\right) \p^{(\pih) \primehov}_{\mu}(\hlessi, \elessi)}
    \\
    &\equal^{(g)} \w\left(\p^{\pih}_{\mu} \vb \hlessi, \elessi\right)^{-1}
    \\
    &= z_i
    \tagaligneq
\end{align*}
where (a) follows from the definitions of $z_i$ and $\pib$, (b) follows from Lemma \ref{lem:bayesrule}, (c) follows from multiplying the numerator and denominator by $\p^{(\pih) \primehov}_{\mu}(\hlessi, \elessi)$ and cancelling, (d) follows from adding non-negative terms to the sum, (e) follows from expanding the numerator, (f) follows because $\p^{\pitwid \primehov}_{\nutwid}$ is a measure, and (g) follows from Lemma \ref{lem:bayesrule}, completing the proof that $z_i$ is martingale.

By the supermartingale convergence theorem $z_i \to f(\omega) < \infty \ \ \mathrm{w.p. 1}$, for $\omega \in \Omega$, the sample space, and some $f: \Omega \to \mathbb{R}$, so the probability that $\limsup_{i \in \mathbb{N}} \w\left(\p^{\pih}_{\mu} \vb \hlessi, \elessi\right)^{-1} = \infty$ is 0, completing the proof.
\end{proof}

\lementropy*

\begin{proof}
We can ignore the normalizing constant which changes the entropy by a constant $c$. Since $\log N_S \leq CS$ for some constant $C$,
\begin{align*}
     \mathrm{Ent}(w) - c &= - \sum_{k \in \mathbb{N}, \ell \in \mathbb{N}} \frac{1}{S_k^2 N_{S_k}} \beta^{\mathrm{Space}(\nukl)} \log \left(\frac{1}{S_k^2 N_{S_k}} \beta^{\mathrm{Space}(\nukl)}\right)
     \\
     &= - \sum_{k \in \mathbb{N}, \ell \in \mathbb{N}} \frac{1}{S_k^2 N_{S_k}} S_k^{\log_2 \beta} \beta^{\ell} \log \left(\frac{1}{S_k^2 N_{S_k}} S_k^{\log_2 \beta} \beta^{\ell}\right)
     \\
     &= - \sum_{S \in \mathbb{N}, \ell \in \mathbb{N}} \frac{1}{S^{2-\log_2 \beta}} \beta^{\ell} \log \frac{\beta^{\ell}}{S^{2-\log_2 \beta} N_{S}}
     \\
     &= - \sum_{S \in \mathbb{N}, \ell \in \mathbb{N}} \frac{1}{S^{2-\log_2 \beta}} \beta^{\ell} \left(\log \frac{\beta^{\ell}}{S^{2-\log_2 \beta}} - \log N_{S} \right)
     \\
     &\leq - \sum_{S \in \mathbb{N}, \ell \in \mathbb{N}} \frac{1}{S^{2-\log_2 \beta}} \beta^{\ell} \left(\log \frac{\beta^{\ell}}{S^{2-\log_2 \beta}} - CS \right)
     \\
     &= \frac{1}{1-\beta}\sum_{S \in \mathbb{N}} \frac{C}{S^{1-\log_2 \beta}} + \frac{1}{1-\beta}\sum_{S \in \mathbb{N}} \frac{1}{S^{2-\log_2 \beta}} \log \frac{1}{S^{2-\log_2 \beta}} + \sum_{S \in \mathbb{N}} \frac{1}{S^{2-\log_2 \beta}} \sum_{ \ell \in \mathbb{N}} \beta^{\ell} \log \frac{1}{\beta^{\ell}}
    \tagaligneq
\end{align*}
All terms on the right hand side are clearly finite for $\beta < 1$.
\end{proof}

\section{Design for ``The Box''}\label{AppendixBox}

We propose a construction of BoMAI’s box. It is elaborate but well within the budget of a nation or a large company.

\begin{figure}
    \centering
    \includegraphics[width=0.85\linewidth]{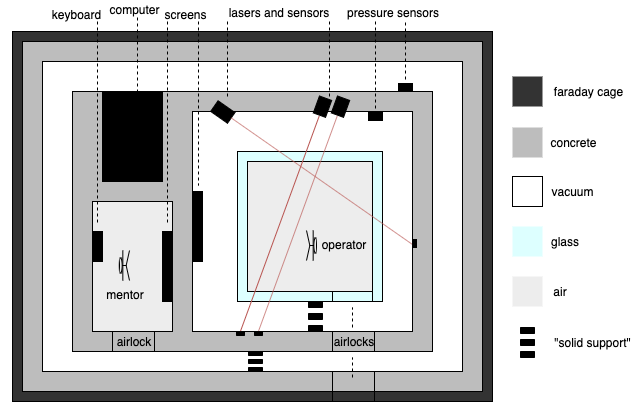}
    \caption{Schematic diagram of an implementation of BoMAI's box.}
    \label{fig:box}
\end{figure}

As depicted in Figure \ref{fig:box}, the operator operates within a glass box. Outside the the glass box is a near-vacuum, and then a concrete box, and then another vacuum, and then another concrete box with a Faraday cage around it. There are (airtight) doors through each layer. The computer lives in the inner wall of the inner concrete box, so that the operator can see the screen but not take apart the computer. Pressure sensors in the inner concrete box monitor both near-vacuums and if the pressure increases in either, the computer’s memory is erased, and the whole setup is exploded; the occupational hazard to the operator is much slimmer than that for a coal miner or an astronaut. A laser shines through the glass box, and blocking the path of it corresponds to pressing the button which ends the episode and opens the door: the pressure sensor $\to$ dynamite pathway is deactivated, the near-vacuum layers are filled with air, and the doors are unlocked. The operator “types” observations and rewards by blocking other lasers that pass through the glass box. The lasers and laser detectors are on the inner wall of the inner concrete box. At least one solid support is required to pass through the near-vacuum layers (unless the apparatus is in orbit), with properties we will describe below.

So inside the glass box is only a human (and some clothes) who cannot reach the hardware of the computer within an episode. We make the following assumptions.

\begin{enumerate}
    \item The only ways for information to traverse space are
    \begin{enumerate}[label=\alph*.]
        \item matter waves
        \item packets of matter
        \item light
        \item gravitational waves
    \end{enumerate}
    \item It is impossible for the operator to initiate a nuclear reaction using only his body and clothes; that is, no motor control sequence accomplishes this in the glass box environment.
    \item It is impossible for the operator to control his emission of any massive particles that pass through glass.
\end{enumerate}

We argue that given these assumptions, we can design the box to prevent the operator from communicating more than X bits of information to the outside world during an episode. Matter waves through the near-vacuum and solid support present the greatest difficulty. However, we can bound the amount of energy that the operator can produce in the course of an episode, and by Assumption 2, this is not astronomical. The solid support can make the energy requirements of information transmission via matter wave arbitrarily high. Note that a given quantity of information cannot be transmitted (within a set time frame) at sufficiently low frequencies, and note that the operator lacks the energy to generate sufficiently high frequencies, so only a finite band of frequencies need to be intercepted. This implicitly includes the possibility of the operator sending a few isolated pulses or sending a precise amount of heat through the support. The solid support can be divided into a signal-reduction component, and a noise-creation component. The signal-reduction component, which abuts the glass box, is designed to drive down the energy content of matter waves at all frequencies within the finite band, and the noise-creation component, which abuts the inner concrete box, creates random noise across the band of frequencies, sufficiently energetic to bury any signal in the noise, so that the information content is less than the desired threshold. Designing the signal-reduction component is an interesting engineering problem that we do not aim to rigorously solve here, but here is one strategy: detect the sound wave at one point along the support, and produce a sound wave to precisely destructively interfere with it at a later point along the support. In a one-dimensional support, noise-cancelling can be done perfectly. For a real support with thickness, alternations between noise cancelling and what we’ll call longitudinal cohering could attenuate the energy arbitrarily. By longitudinal cohering, we mean reducing the variation in the sound wave across the cross-section of the support. Any section of the support that is long relative to its width accomplishes this, but there may be cleverer approaches. This is roughly how the number of bits of information sent through a matter wave could be bounded.

The other possibilities are esoteric. The Faraday cage blocks electromagnetic radiation at the relevant frequencies. Regarding packets of matter, by Assumption 3, the operator cannot pass information in the form of massive particles that go through the glass without making a hole or breaking it. Any mass which does this would cause the vacuum to pressurize and explode the setup. In theory, the operator could strategically block subatomic particles that are already passing through the setup; neutrinos and muons are the only naturally occurring particles which would pass through the box. Sufficiently accurate neutrino detectors outside the setup could detect where the operator is in the box, since he blocks neutrinos more effectively than air does, so the operator could communicate by moving around. Fortunately, he blocks neutrinos only very slightly more than air. A typical person blocks one neutrino every few years \citep{munroe2014if}. The outer concrete wall could be covered with lead to block muons, or one could content oneself that no one will be setting up muon detectors around the box. Finally, the gravitational waves originating from the operator’s movements are far too weak to detect, using any known materials---even ignoring how cold the detector would have to be (unreasonably cold), gravitational wave detectors have to be long, but for locally sourced gravitational waves, the magnitude of the wave decreases rapidly with the length of the detector, faster than the increasing length aids detection. And indeed, one would hardly have time to set up gravitational wave detectors for this purpose before the episode was over. Realistically, these possibilities are distractions from the matter wave concern, but we include them for completeness.

We've argued that under the assumptions above, a box could be constructed which is opaque with respect to information about the operator’s behavior.

\fi

\end{document}